\documentclass{article}
\usepackage{microtype}
\usepackage{graphicx}
\usepackage[font=small,labelfont=bf]{caption} 
\usepackage{subcaption}
\usepackage{booktabs} 

\usepackage[pagebackref=true]{hyperref}
\hypersetup{
    colorlinks=true,
    citecolor=darkblue,
    linkcolor=darkred,
    filecolor=darkblue,
    urlcolor=darkgreen,
}

\let\oldaddcontentsline\addcontentsline

\usepackage[preprint]{icml2026}

\makeatletter
\icml@noticeprintedtrue

\renewcommand{\printAffiliationsAndNotice}[1]{}
\def\icml@printAffiliationsAndNotice#1{}

\let\icml@affiliations\@empty
\let\icml@authorlist\@empty
\let\icmlcorrespondingauthor@text\@undefined
\def\Notice@String{}

\makeatother

\usepackage{amsmath}
\usepackage{amssymb}
\usepackage{mathtools}
\usepackage[capitalize,noabbrev]{cleveref}

\usepackage{microtype}
\usepackage{xurl}
\usepackage{booktabs}
\usepackage{nicefrac}
\usepackage[table]{xcolor} 
\usepackage{longtable}
\usepackage{array}
\usepackage{multirow}
\usepackage{enumitem}
\usepackage{wrapfig}
\usepackage{float}
\usepackage{makecell}
\usepackage{relsize}

\usepackage{amsmath, amssymb, mathtools, bm, amsfonts, amsthm}

\usepackage{booktabs}
\usepackage[font=small,labelfont=bf]{caption} 
\usepackage{algorithm}
\usepackage{algorithmic}

\usepackage{amsmath, amssymb, mathtools, bm}
\usepackage{amsfonts}     
\usepackage{amsthm}
\usepackage{capt-of}

\usepackage{amsmath,amsfonts,bm,bbm,mathrsfs}
\usepackage{amssymb,mathtools}

\def\eqref#1{equation~\ref{#1}}

\def\1{\bm{1}}

\DeclareMathAlphabet{\mathsfit}{\encodingdefault}{\sfdefault}{m}{sl}
\SetMathAlphabet{\mathsfit}{bold}{\encodingdefault}{\sfdefault}{bx}{n}

\providecommand{\KL}{D_{\mathrm{KL}}}

\usepackage{siunitx}
\newcommand{\mstd}[2]{#1{\scriptsize\,($\pm$#2)}}

\theoremstyle{plain}
\newtheorem{theorem}{Theorem}[section]
\newtheorem{proposition}[theorem]{Proposition}
\newtheorem{lemma}[theorem]{Lemma}
\newtheorem{corollary}[theorem]{Corollary}
\theoremstyle{definition}

\theoremstyle{remark}

\usepackage[most]{tcolorbox}
\usepackage{lipsum} 
\definecolor{lightgray}{gray}{0.9}
\usepackage{forest}
\usepackage{bm}

\definecolor{sota}{HTML}{F0FFDF}
\definecolor{royalblue}{HTML}{4169E1}

\newcommand{\tabcite}[1]{[\citenum{#1}]}
\usepackage[textsize=tiny]{todonotes}

\begin{document}

\twocolumn[
  \vspace{-10pt}
  \icmltitle{MolHIT: Advancing Molecular-Graph Generation with \texorpdfstring{\\}{ } Hierarchical Discrete Diffusion Models}
    
  \begin{center}
    {\bfseries Hojung Jung$^{1, \ast}$, Rodrigo Hormazabal$^2$, Jaehyeong Jo$^1$, Youngrok Park$^1$,} \\
    \vspace{5pt} 
    {\bfseries Kyunggeun Roh$^3$, Se-Young Yun$^1$, Sehui Han$^2$, Dae-Woong Jeong$^2$} \\
    \vspace{10pt}
    $^1$KAIST AI \quad $^2$LG AI Research \quad $^3$Seoul National University\\
    \vspace{5pt}
    \texttt{ghwjd7281@kaist.ac.kr}
  \end{center}

  \vskip 0.3in
]

\makeatletter
\def\blfootnote{\gdef\@thefnmark{}\@footnotetext}
\makeatother

\begin{NoHyper}
\blfootnote{\hspace{-1em}$^\ast$Work done during an internship at LG AI Research.}
\end{NoHyper}

\begin{abstract}
Molecular generation with diffusion models has emerged as a promising direction for AI-driven drug discovery and materials science. While graph diffusion models have been widely adopted due to the discrete nature of 2D molecular graphs, existing models suffer from low chemical validity and struggle to meet the desired properties compared to 1D modeling. In this work, we introduce \textbf{MolHIT}, a powerful molecular graph generation framework that overcomes long-standing performance limitations in existing methods. MolHIT is based on the Hierarchical Discrete Diffusion Model, which generalizes discrete diffusion to additional categories that encode chemical priors, and decoupled atom encoding that splits the atom types according to their chemical roles. Overall, MolHIT achieves new state-of-the-art performance on the MOSES dataset with near-perfect validity for the first time in graph diffusion, surpassing strong 1D baselines across multiple metrics. We further demonstrate strong performance in downstream tasks, including multi-property guided generation and scaffold extension. 
\end{abstract}

\section{Introduction}
\label{sec:introduction}

\begin{figure}[t!]
    \centering
    \includegraphics[width=\linewidth]{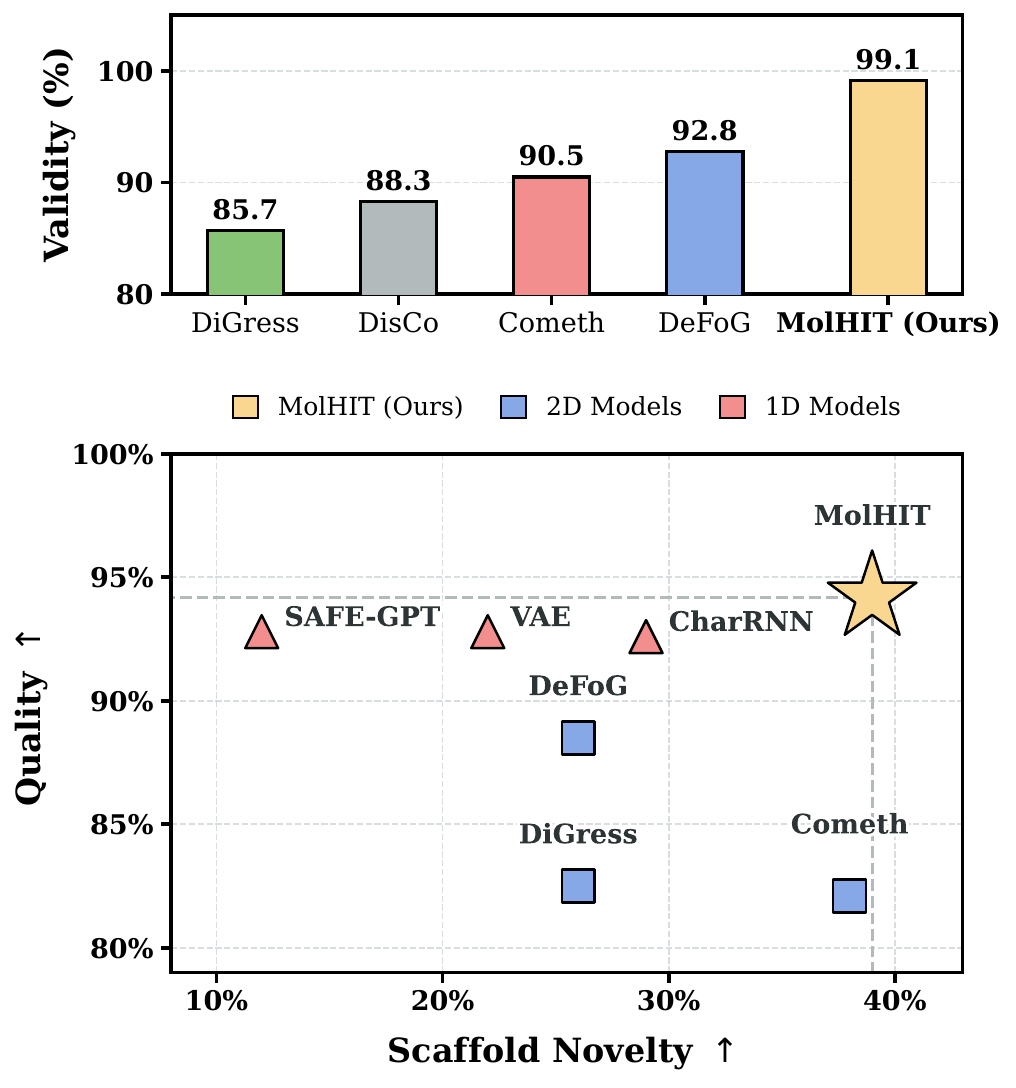}
    \vspace{-0.25in}
    \caption{MolHIT achieves SOTA result on MOSES dataset. (Top) Near-perfect validity, outperforming existing graph diffusion models. (Bottom) Pareto-optimal in quality-novelty trade-off.
    }
    \label{fig:frontpage_sota_results}
    \vspace{-0.25in}
\end{figure}

\begin{figure*}[t] 
    \centering
    \includegraphics[width=1.0\textwidth]{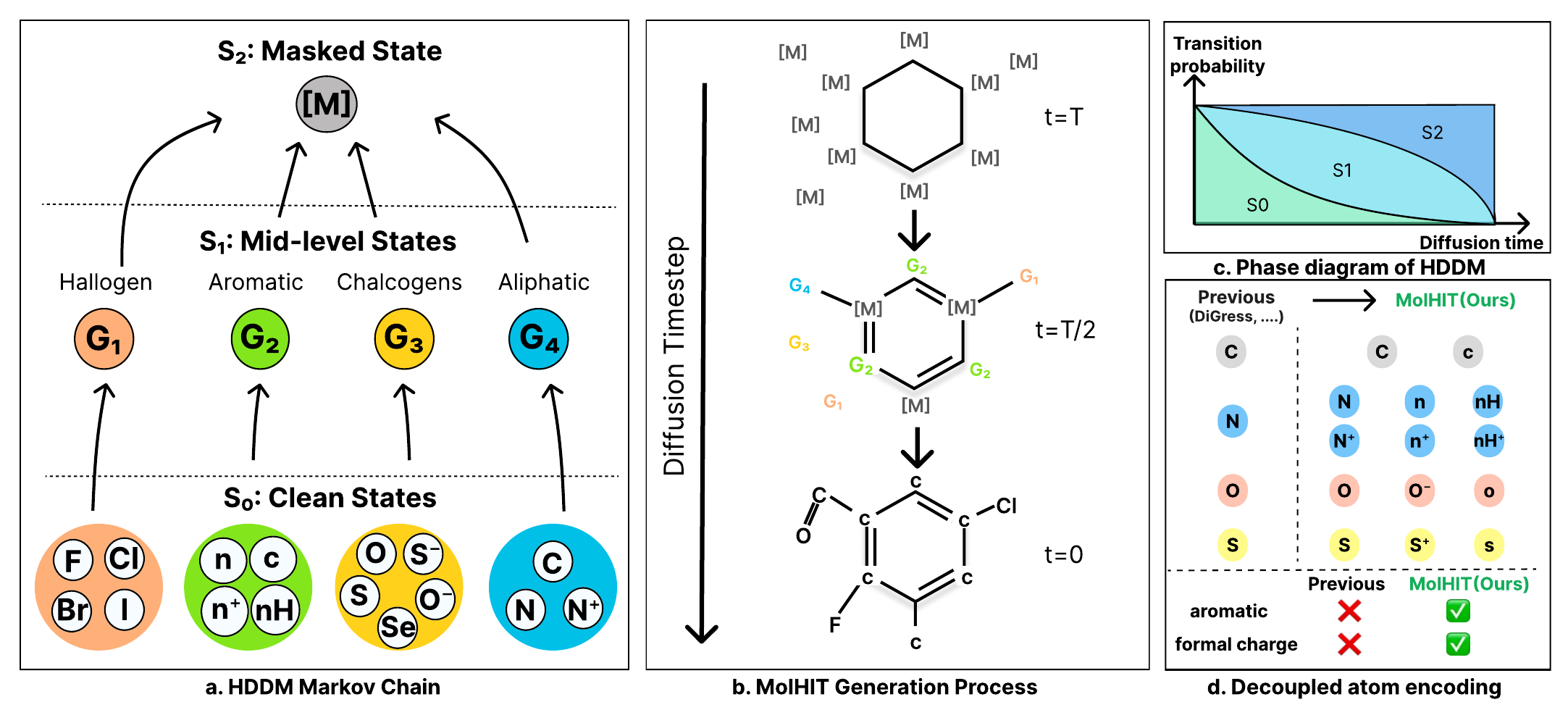}
    \vspace{-0.2in}
    \caption{\textbf{Overview of MolHIT}. (a) Markov chain of Hierarchical Discrete Diffusion Model (HDDM). Clean states ($S_0$) are transited to the mid-level states ($S_1$) and finally to the masked state ($S_2$). (b) Generation process of MolHIT. From the masked prior, atoms are denoised into mid-level states and then to atomic tokens in a coarse-to-fine manner. (c) Phase diagram of HDDM showing the transition probability of the forward process. (d) Decoupled atom encoding scheme, separately encoding the aromatic and charged atom types.
    }
    \label{fig:main_figure}
    \vspace{-0.1in}
\end{figure*}

Molecular generation with AI has the potential to significantly speed up materials design~\citep{sanchez2018inverse} and drug discovery~\citep{zhang2025artificial}. While this promise has led to many different modeling strategies, generating valid and novel molecules is challenging due to the vast combinatorial search space~\citep{dobson2004chemical}. Here, the primary challenge is not generating novel structures, but ensuring the structures remain chemically valid and relevant. Even a minor atom-level error can produce a structure that is chemically impossible or synthetically inaccessible. Consequently, it is necessary to develop generative models that efficiently explore this immense chemical space while generating valid and synthesizable molecules.

One common approach is to treat molecules as 1D sequences, 
most commonly through the SMILES representation~\citep{weininger1988smiles}. By representing molecular graphs into strings, these models can leverage powerful natural language processing techniques to learn patterns of characters. While this simpler learning objective results in generating valid molecules, they suffer from memorization, often reproducing patterns or common subsequences in the training set. This limited exploration capability creates a performance plateau as shown in Fig.~\ref{fig:frontpage_sota_results}, where high validity is achieved at the expense of a reduced number of new structures.

To overcome the exploration limits of sequence-based approaches, graph generative models~\citep{jo2022score, liu2023generative} treat molecules as interconnected systems of atoms and bonds. Unlike 1D models that often overfit to specific textual patterns, graph-based architectures are designed to internalize the underlying topological principles of chemical structures, allowing them to generalize beyond the training set and discover novel structures. In particular, discrete diffusion models~\citep{austin2021structured} have been widely studied for molecular graph generation as they naturally align with the categorical nature of atoms and bonds~\citep{vignac2022digress, xu2024discrete, qin2024defog, seo2025learning}.

While these models excel at structural exploration, they are prone to generating invalid or chemically unstable samples compared to well-optimized 1D models. This creates a performance gap that raises a fundamental research question: \textbf{Can we leverage the inductive biases of graph modeling to match the validity of sequence models while maintaining their superior capacity for structural novelty?}

We identify two critical limitations in existing molecular graph generation with discrete diffusion. \textbf{(1)} First, current uniform or absorbing transition treats each atom category as an independent category, even though there is well known chemical relationship that some atoms are easier to be replaced with another. Neglecting well-established domain priors often makes the learning unnecessarily hard, especially in molecular settings where high-quality molecule data is scarce.
\textbf{(2)} Second, existing graph models rely on naive atom encodings, ignoring the fact that a single atom can have different characteristics when it has a formal charge or consists of a ring (aromaticity). We reveal that this makes molecular graph generation tasks ill-posed and unnecessarily challenging, which we demonstrate in the reconstruction experiments in Fig.~\ref{fig:reconstruction_failure} where previous atom encoding fails.

In light of these observations, we introduce MolHIT, a hierarchical discrete diffusion framework designed to bridge the gap between structural innovation and chemical validity. Our framework is built upon the Hierarchical Discrete Diffusion Model (HDDM), where additional categories are added to represent natural chemical groups into the diffusion process. This coarse-to-fine approach allows the model to establish high-level chemical identities before refining them into specific atom types, thereby capturing the meaningful dependencies of molecular structure that uniform or absorbing kernels often overlook. Furthermore, we introduce Decoupled Atom Encoding (DAE) to resolve the information loss found in naive representations by explicitly split atoms based on their specific chemical roles, such as formal charge and aromaticity. By providing a chemical role into each token, DAE not only resolves the reconstruction problem in original atom encoding, but also reduces the burden of differentiating atom roles solely with the ($O(n^2)$) bond features. Combined together, \textbf{MolHIT} reaches a new Pareto frontier in generating novel structures with high quality, surpassing both existing 1D and 2D models (Fig.~\ref{fig:frontpage_sota_results}).

We extensively evaluate MolHIT with experiments on large molecular benchmarks, including unconditional generation tasks on MOSES~\citep{polykovskiy2020molecular} and GuacaMol~\citep{brown2019guacamol} benchmarks and conditional generation tasks, including scaffold extension and multi-property guided generation tasks. Across all benchmarks and tasks, MolHIT shows significant improvements over previous graph diffusion models, resulting in a new state-of-the-art that surpass 1D models.

Our contributions can be summarized as follows:
\vspace{-0.1in}
\begin{itemize}[itemsep=0.5mm, parsep=3pt]
    \item We introduce \textbf{MolHIT}, a molecular graph diffusion model built upon a novel \textbf{Hierarchical Discrete Diffusion Model (HDDM)} framework with a mathematically guaranteed ELBO.
    \item We identify a critical limitation in the prior graph generative models' atom encoding and propose a simple solution: \textbf{Decoupled Atom Encoding (DAE)}. By representing atoms based on their specific chemical roles, we find DAE enhances both the model’s generative expressiveness and chemical reliability.
    \item We achieve the SOTA performance on the MOSES benchmarks in multiple metrics, significantly outperforming both existing graph diffusion models and 1D sequence-level baselines.
    \item We test our algorithm on practical downstream tasks including multi-property guided generation and scaffold extension, achieving the highest performance compared to the previous graph diffusion approach.
\end{itemize}
\section{Preliminaries}
\label{sec:preliminaries}

\begin{figure}[t]
    \centering
    \includegraphics[width=\linewidth]{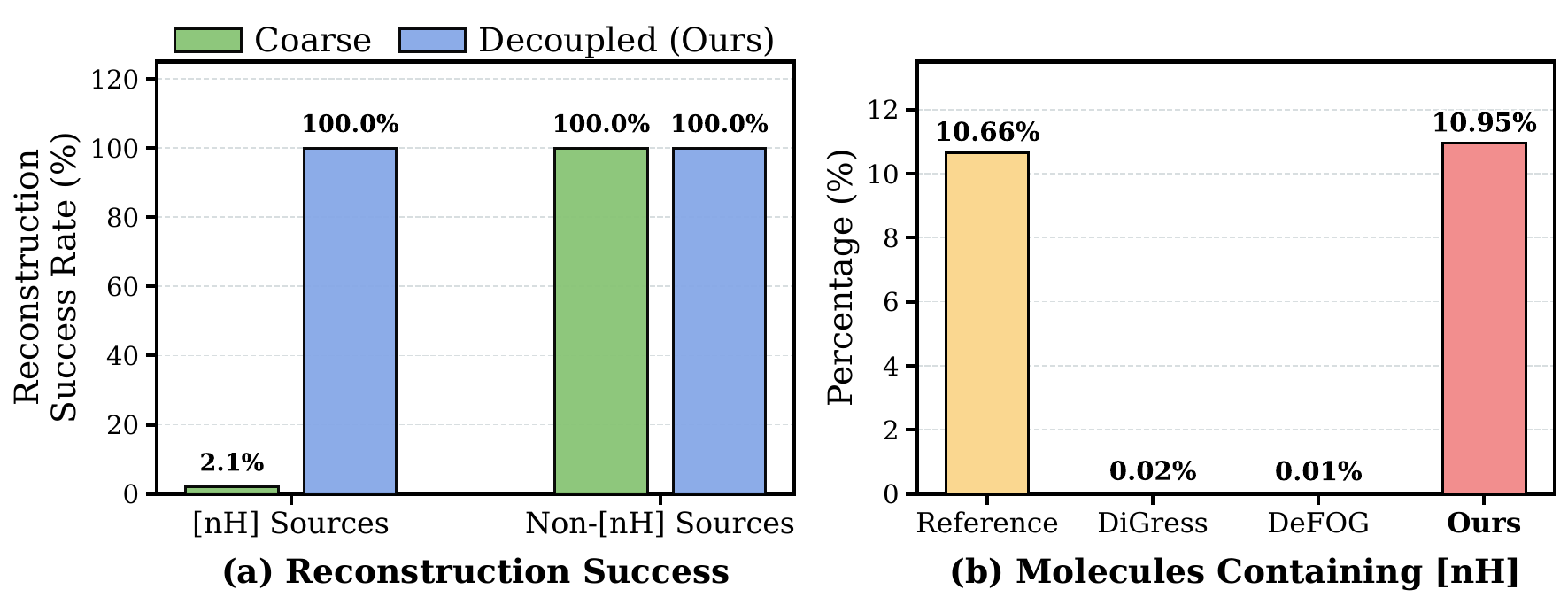}
    \vspace{-0.25in}
    \caption{Existing atom encoding for molecular graph is ill-posed. (Left) Reconstruction success rate on the Moses dataset with previous encoding and our decoupled atom encoding. (Right) Proportion of generated molecules containing pyrrolic nitrogen $[nH]$.}
    \label{fig:reconstruction_failure}
    \vspace{-0.15in}
\end{figure}

\subsection{Discrete Diffusion Models}
Given a discrete state space $S$ with $K$ categories, discrete diffusion models define a noising and denoising process within a discrete space. Specifically, for $\mathbf{x}\in S$, the noising process is described by a Markov chain as follows:
\begin{equation}
    q(\mathbf{x}_t|\mathbf{x}_{t-1})=\mathrm{Cat}(\mathbf{x}_t;\mathbf{x}_{t-1}Q_t).
\end{equation}
Here, marginal probability of $\mathbf{x}_t$ in timestep $t$, given clean data $\mathbf{x}_0$ can be calculated with 
\begin{equation}
    q(\mathbf{x}_t|\mathbf{x}_0)=\mathrm{Cat}(\mathbf{x}_t;\mathbf{x}_{0}\bar{Q}_t), \;\; \bar{Q}_t=Q_tQ_{t-1}\cdots Q_1.
\end{equation}
As shown by \citet{austin2021structured}, one can design multiple types of diffusion process, where two types of processes are widely used because of the closed-form calculation of the forward process and natural noising process.

\paragraph{Uniform transition}
Uniform transition assumes uniform prior $p_T(\mathbf{x}_T = c) = \frac{1}{K}$ $\text{for all } c \in \{1, \dots, K\}$. Then one could define a forward noising process by interpolating clean data $\mathbf{x}_0$ with the prior in the following way:
\begin{equation}
q(\mathbf{x}_t|\mathbf{x}_0) = \mathrm{Cat}(\mathbf{x}_t; (1-\bar{\alpha}_t)\frac{1}{K}\mathbf{1}\mathbf{1}^{T} + \bar{\alpha}_t \mathbf{x}_0),
\end{equation}
where $\bar{\alpha}_t$ is monotonic decreasing function with $\bar{\alpha}_0]=1, \bar{\alpha}_T=0$, which we call diffusion scheduler.

\paragraph{Marginal transition}
To facilitate the diffusion learning, marginal transition assumes data prior $\pi$ to be an optimal probability distribution that approximates the empirical data distribution from the training set. This has been primarily adopted for graph diffusion models DiGress~\citep{vignac2022digress, siraudin2024cometh}, where further details are in Appendix~\ref{app:further_backgrounds_marginal_transition}.

\paragraph{Absorbing transition}
Unlike uniform and marginal transition where diffusion process operates on the given K categories, one can introduce an additional masked (absorbing) state $\mathbf{m}$ with prior $\mathbf{e_m}$ being a one-hot vector of the masked state. Then, one can naturally define a diffusion process as an absorbing process in a Markov chain, which results in the following forward form:
\begin{equation}
    q(\mathbf{x}_t|\mathbf{x}_0) = \mathrm{Cat}(\mathbf{x}_t; \bar{\alpha}_t \mathbf{x}_0 + (1-\bar{\alpha}_t)\mathbf{e_m}).
\end{equation}
Given $q(\mathbf{x}_{t-1}|\mathbf{x}_t,\mathbf{x}_0) = \mathrm{Cat}(\mathbf{x}_{t-1};\frac{\mathbf{x}_tQ_{t|s}^\top\;\odot\;\mathbf{x}_0\bar{Q}_{t-1}}{\mathbf{x}_0\bar{Q}_t\mathbf{x}_t^T})$, one could estimate posterior $p_{\theta}(\mathbf{x}_{t-1}|\mathbf{x}_t,\mathbf{x}_0)$ by learning to estimate the clean data $\hat{\mathbf{x}}_0$ given the noisy data $\mathbf{x}_t$. This enables training the diffusion models with simple cross-entropy loss, where the loss function becomes directly linked to the negative evidence lower bound (NELBO)~\citep{austin2021structured,sahoo2024simple}.

\subsection{Molecular Graph Generation with Discrete Diffusion}
\label{subsec:3.2}

Given molecular graph $G=(X,E)$, denote $X\in\mathbb{R}^{n\times d_X}$, $E\in\mathbb{R}^{n\times n\times d_E}$ for the atom matrix and adjacency matrix (bond matrix) where $n$ is the number of atoms and $d_X, d_E$ are feature dimensions of atoms and edges. The forward process of discrete diffusion operates independently on the atom and bond matrices:
\begin{equation}
\begin{aligned}
    & G_t=(\mathbf{X}_t,\mathbf{E}_t) \;\;: \;\mathbf{X}_t = \mathbf{X}_0\mathbf{\bar{Q}}_{\mathbf{X},t},\;
     \mathbf{E}_t = \mathbf{E}_0\mathbf{\bar{Q}}_{\mathbf{E},t}.\\
\end{aligned}
\end{equation}
where, we define $\bar{Q}_{X,t}=Q_{X,t}\cdots Q_{X,1}, \bar{Q}_{E,t}=Q_{E,t}\cdots Q_{E,1}$ are forward transition matrix usually calculated in a closed form for efficiency.

Given a noisy graph $G_t$, a neural network is trained to estimate a clean graph $G_0=(X_0,E_0)$ through predicting clean atoms and bonds independently, which in practice results in the following cross-entropy (CE) loss:
\begin{equation}
\label{eq:trainig_loss_digress}
\begin{aligned}
&\mathcal{L}_{\theta}
=
\mathbb{E}_{t,\,G_t\sim q(\cdot\mid G_0)}
\Bigg[
\sum_{i=1}^{n}
-
\log p_{\theta}^{X}\!\left(X_{0,i}\mid G_t,t\right)
\\ 
& +
\lambda\!\!\sum_{1\le i<j\le n}
-\log p_{\theta}^{E}\!\left(E_{0,ij}\mid G_t,t\right)
\Bigg],
\end{aligned}
\end{equation}
where $\lambda > 0$ is a weighting factor that balances the relative contribution of node and edge loss.
\section{MolHIT Framework}
\label{sec:method}

\subsection{Hierarchical Discrete Diffusion Models}

We introduce \emph{Hierarchical Discrete Diffusion Models} (HDDM), which generalize the discrete diffusion framework into a multi-stage setting. Unlike standard discrete diffusion~\citep{austin2021structured}, where the forward transitions operate either within the clean vocabulary space or toward an absorbing (masked) state, HDDM introduces additional mid-level states that bridge the corruption process. 

To design a discrete diffusion in this augmented space, we first show that there exists a simple forward process that admits a tractable closed-form transition kernel. Specifically, for clean state space $\mathcal{S}_0$ with $K$ categories, suppose we add additional $G+1$ categories such that we have an augmented state space $\mathcal{T}$ with cardinality $D = K + G + 1$. As illustrated in Figure~\ref{fig:main_figure}-(a), we partition $\mathcal{T}$ into three disjoint subsets: $\mathcal{S}_0$, mid-level states $\mathcal{S}_1$ with $G$ categories, and the masked state $\mathcal{S}_2 = \{m\}$. 

Now, we define the transition kernel via a row-stochastic matrix $\mathbf{\Phi} \in [0, 1]^{K \times G}$, where $\mathbf{\Phi}_{ij}$ represents the probability of mapping any element $i \in \mathcal{S}_0$ to a mid-level element $j \in \mathcal{S}_1$. This operator induces a transition matrix $Q^{(1)} \in [0, 1]^{D \times D}$ on the full space $\mathcal{T}$, structured as a block matrix relative to the partition $(\mathcal{S}_0, \mathcal{S}_1, \mathcal{S}_2)$:$$Q^{(1)} = 
\begin{bmatrix}
\mathbf{0}_{K \times K} & \mathbf{\Phi} & \mathbf{0}_{K \times 1} \\
\mathbf{0}_{G \times K} & \mathbf{I}_{G} & \mathbf{0}_{G \times 1} \\
\mathbf{0}_{1 \times K} & \mathbf{0}_{1 \times G} & 1
\end{bmatrix}$$
Here, $\mathbf{I}_G$ is the identity matrix, indicating that states in $\mathcal{S}_1$ are absorbing the clean states in $\mathcal{S}_0$ under $Q^{(1)}$. Similarly, we define the masking operation via the transition matrix $Q^{(2)}$, which maps all states in $\mathcal{S}_0 \cup \mathcal{S}_1$ to the unique absorbing state in $\mathcal{S}_2$:$$Q^{(2)} = 
\begin{bmatrix}
\mathbf{0}_{(K+G) \times (K+G)} & \mathbf{1}_{(K+G) \times 1} \\
\mathbf{0}_{1 \times (K+G)} & 1
\end{bmatrix}$$
These transition matrices form the basis of the HDDM forward process as in the following lemma:

\begin{tcolorbox}[
    sharp corners,                        
    colback=white,                        
    colframe=black!50,                     
    coltitle=black,                        
    fonttitle=\bfseries,                   
    attach title to upper,                  
    after title={\smallskip\hrule\smallskip}, 
    left=5pt, right=5pt, top=5pt, bottom=5pt 
]
\begin{lemma}
\label{lemma:HDDM_forward_closed_form}
Define diffusion schedules $\alpha_t, \beta_t$ to be monotonically decreasing functions satisfying the boundary conditions $\alpha_0=\beta_0=1$ and $\alpha_1=\beta_1=0$, such that $\alpha_t \leq \beta_t$ for all $t$. We define the forward diffusion process of the hierarchical Markov chain via the transition kernel $Q_{t|s}$ from timestep $s$ to $t$ as:
\begin{equation}
Q_{t|s} = \alpha_{t|s} \mathrm{I} + (\beta_{t|s}-\alpha_{t|s})Q^{(1)} + (1-\beta_{t|s})Q^{(2)},
\end{equation}
where $\alpha_{t|s}:=\alpha_t / \alpha_s, \beta_{t|s}:= \beta_t / \beta_s$.
Then, the transition kernels satisfy the Chapman–Kolmogorov equation, such that $Q_{t|s}Q_{s|r} = Q_{t|r}$ for any $r < s < t$. Consequently, the cumulative forward transition from the initial state to timestep $t$ is given by:
\begin{equation}
\label{eq:forward_process}
Q_t = \alpha_t \mathrm{I} + (\beta_t-\alpha_t)Q^{(1)} + (1-\beta_{t})Q^{(2)},
\end{equation}
where $\mathrm{I}$ denotes the identity matrix in $\mathcal{T}$.
\end{lemma}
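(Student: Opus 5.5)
The plan is to reduce the Chapman–Kolmogorov identity to a small multiplication table for the three matrices $\mathrm{I}$, $Q^{(1)}$, $Q^{(2)}$. Then expand the product $Q_{t|s}Q_{s|r}$ bilinearly and collect coefficients. I will use the row-vector convention of the paper: a state $\mathbf{x}$ evolves as $\mathbf{x}Q$. Under this convention the kernel for $r\to t$ composes as $Q_{s|r}Q_{t|s}$. However, the three matrices commute among the products needed, so the order turns out to be irrelevant; I will check this explicitly.

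\textbf{Step 1: multiplication table.} Using the block forms and the row-stochasticity of $\mathbf{\Phi}$ (so $\mathbf{\Phi}\mathbf{1}_G=\mathbf{1}_K$), I will verify by direct block multiplication:
\begin{equation*}
Q^{(1)}Q^{(1)}=Q^{(1)},\quad Q^{(2)}Q^{(2)}=Q^{(2)},\quad Q^{(1)}Q^{(2)}=Q^{(2)},\quad Q^{(2)}Q^{(1)}=Q^{(2)}.
\end{equation*}
The first identity holds because $\mathbf{\Phi}\mathbf{I}_G=\mathbf{\Phi}$ and the identity block is idempotent. The third holds because every row of $Q^{(1)}$ is a probability vector supported on $\mathcal{S}_0\cup\mathcal{S}_1\cup\mathcal{S}_2$, and $Q^{(2)}$ sends each of those states to $m$. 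The fourth holds because the only nonzero column of $Q^{(2)}$ is the last one, and the last row of $Q^{(1)}$ is $\mathbf{e}_m$.

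\textbf{Step 2: expand the product.} Write $a=\alpha_{t|s}$, $b=\beta_{t|s}$, $a'=\alpha_{s|r}$, $b'=\beta_{s|r}$. Expanding
\[
\bigl(a\mathrm{I}+(b-a)Q^{(1)}+(1-b)Q^{(2)}\bigr)\bigl(a'\mathrm{I}+(b'-a')Q^{(1)}+(1-b')Q^{(2)}\bigr)
\]
with the table gives a combination of $\mathrm{I}$, $Q^{(1)}$ and $Q^{(2)}$ with the following coefficients:
\begin{itemize}
\item the coefficient of $\mathrm{I}$ is $aa'$;
\item the coefficient of $Q^{(1)}$ is $a(b'-a')+(b-a)a'+(b-a)(b'-a')=bb'-aa'$;
\item the coefficient of $Q^{(2)}$ is whatever remains, namely $1-bb'$.
\end{itemize}
The last coefficient needs no separate computation. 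Each $Q_{\cdot|\cdot}$ is row-stochastic, so its coefficients sum to $1$, and the product's coefficients must sum to $1$ as well.

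\textbf{Step 3: telescope.} The ratios telescope: $aa'=\alpha_t/\alpha_r=\alpha_{t|r}$ and $bb'=\beta_{t|r}$. So the product equals $Q_{t|r}$. The expansion is symmetric in the primed and unprimed factors, which confirms that the order of composition does not matter.

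\textbf{Step 4: the cumulative kernel.} Setting $r=0$ and using $\alpha_0=\beta_0=1$ gives formula \eqref{eq:forward_process} for $Q_t$.

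\textbf{Main obstacle: well-definedness, not the algebra.} For $Q_{t|s}$ to be a genuine transition matrix, its coefficients must be nonnegative.
\begin{itemize}
\item $\alpha_{t|s}\ge 0$ holds trivially.
\item $1-\beta_{t|s}\ge 0$ follows from monotonicity of $\beta$.
\item $\beta_{t|s}\ge\alpha_{t|s}$ is not implied by $\alpha_t\le\beta_t$ alone.
\end{itemize}
I would note that the third condition amounts to $\alpha_t/\beta_t$ being nonincreasing, and impose this (it holds for the schedules used). The closed-form identities themselves, however, hold for any schedules. A secondary nuisance is division by zero at the endpoint $s=1$, where $\alpha_1=\beta_1=0$. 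I would handle it by restricting to $s<1$ and extending by continuity, or by defining the kernel directly through $Q_t$ at the endpoint.
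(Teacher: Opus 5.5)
Your proposal is correct and follows essentially the same route as the paper. The paper likewise rests on the absorption relations $Q^{(i)}Q^{(j)}=Q^{(\max(i,j))}$ (your multiplication table), expands the product bilinearly, and telescopes $\alpha_{t|s}\alpha_{s|r}=\alpha_{t|r}$ and $\beta_{t|s}\beta_{s|r}=\beta_{t|r}$; the only difference is that it packages this as an induction over an $n$-level hierarchy and recovers the lemma as the two-level case. Your extra remarks are also sound and go beyond the paper: the span of $\mathrm{I},Q^{(1)},Q^{(2)}$ is commutative, so the composition order is immaterial, and nonnegativity of $\beta_{t|s}-\alpha_{t|s}$ requires $\alpha_t/\beta_t$ to be nonincreasing, which $\alpha_t\le\beta_t$ alone does not guarantee.
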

\end{tcolorbox}

Note that the above forward transition operators can be naturally extended to arbitrary hierarchies in state space. We provide a proof of the above lemma with a generalized forward process in Appendix~\ref{app:proof_of_closdeform_lemma}. 

Now for training guarantee, one can derive negative ELBO (NELBO), which we prove in Theorem~\ref{theorem:general_HDDM_NELBO} in Appendix. 
In practice, one can define $\mathbf{\Phi}$ as a deterministic projection that clusters clean atom categories into meaningful groups. We show in this special case, NELBO of HDDM can be further simplified as in the following.

\begin{tcolorbox}[
    sharp corners,                         
    colback=white,                         
    colframe=black!50,                     
    coltitle=black,                        
    fonttitle=\bfseries,                  
    attach title to upper,                
    after title={\smallskip\hrule\smallskip}, 
    left=5pt, right=5pt, top=5pt, bottom=5pt
]
\begin{theorem}
\label{theorem: continuous time NELBO}
If the forward transition kernels $Q_t$ in Eq.~\ref{eq:forward_process} is induced from the deterministic projection $\mathbf{\Phi}$, the continuous time NELBO of HDDM is given as:
\begin{equation}
\label{eq:NEBLO_deterministic_mainbody}
\begin{aligned}    
& \mathcal{L}_{\text{NELBO}}^{\infty}(\theta) = \\
&\mathbb{E}_{Q,t}\frac{\alpha_t(\beta_{t}'-\alpha_t')}{\beta_t-\alpha_t}\log{\langle\mathbf{x}_{\theta},\mathbf{x}\rangle}\cdot\mathbb{I}\left[\mathbf{z}_t\in \mathcal{S}_1\right] +
\\ & 
\frac{\beta_t'(\beta_t - \alpha_t)}{\beta_t(1-\beta_t)}\log\langle Q^{(1)}\mathbf{x}_{\theta}, Q^{(1)}\mathbf{x})\rangle \cdot\mathbb{I}\left[\mathbf{z}_t=\mathbf{m}\right]
\\ &
+\frac{\alpha_t\beta_t'}{\beta_t(1-\beta_t)}\log{\langle\mathbf{x}_{\theta},\mathbf{x}\rangle}\cdot\mathbb{I}\left[\mathbf{z}_t=\mathbf{m}\right] + C,
\end{aligned}
\end{equation}
for some constant $C$ and the denoiser $\mathbf{x}_{\theta}(\mathbf{z}_t, t)$.
\end{theorem}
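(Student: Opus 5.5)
We take the MDLM-style route: write the discrete-time NELBO as a sum of KL divergences between the true posterior and the model posterior, compute both in closed form from Lemma~\ref{lemma:HDDM_forward_closed_form}, and send the step size to zero.

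\textbf{Setup.} Fix $\mathbf{x}\in\mathcal{S}_0$ and let $g=\Phi(\mathbf{x})\in\mathcal{S}_1$ be its deterministic group. By Lemma~\ref{lemma:HDDM_forward_closed_form}, $\mathbf{z}_t$ is a three-point distribution:
\begin{itemize}
\item $\mathbf{z}_t=\mathbf{x}$ with probability $\alpha_t$;
\item $\mathbf{z}_t=g$ with probability $\beta_t-\alpha_t$;
\item $\mathbf{z}_t=\mathbf{m}$ with probability $1-\beta_t$.
\end{itemize}
Partition $[0,1]$ into steps $s=t-\Delta t<t$. The discrete NELBO is a reconstruction term, a prior term (a constant since $\beta_1=0$), and the diffusion term
\[
\sum_t \mathbb{E}_{\mathbf{z}_t}\,\KL\big(q(\mathbf{z}_s\mid\mathbf{z}_t,\mathbf{x})\,\|\,p_\theta(\mathbf{z}_s\mid\mathbf{z}_t)\big),
\]
where $p_\theta(\mathbf{z}_s\mid\mathbf{z}_t)=q(\mathbf{z}_s\mid\mathbf{z}_t,\mathbf{x}_\theta)$ with $\mathbf{x}_\theta$ a distribution over $\mathcal{S}_0$. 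I will impose carry-over unmasking: if $\mathbf{z}_t\in\mathcal{S}_0$ it is copied, and if $\mathbf{z}_t\in\mathcal{S}_1$ then $\mathbf{x}_\theta$ is supported on $\Phi^{-1}(\mathbf{z}_t)$. This makes $Q^{(1)}\mathbf{x}_\theta$ a distribution over groups.

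\textbf{Posterior computation.} Using Bayes with $Q_{t|s}$ from the lemma, I compute the posterior case by case.
\begin{itemize}
\item $\mathbf{z}_t\in\mathcal{S}_0$: the posterior is a point mass and the KL is $0$.
\item $\mathbf{z}_t=g$: $\mathbf{z}_s\in\{\mathbf{x},g\}$ with
\[
q(\mathbf{z}_s=\mathbf{x}\mid\cdot)=\frac{\alpha_s(\beta_{t|s}-\alpha_{t|s})}{\beta_t-\alpha_t}=\frac{\alpha_s\beta_t/\beta_s-\alpha_t}{\beta_t-\alpha_t}.
\]
The model mirrors this with $\mathbf{x}$ replaced by $\mathbf{x}_\theta$, so the KL reduces to $-q(\mathbf{z}_s=\mathbf{x}\mid\cdot)\log\langle\mathbf{x}_\theta,\mathbf{x}\rangle$.
\item $\mathbf{z}_t=\mathbf{m}$: $\mathbf{z}_s\in\{\mathbf{x},g,\mathbf{m}\}$ with weights proportional to $\alpha_s(1-\beta_{t|s})$, $(\beta_s-\alpha_s)(1-\beta_{t|s})$ and $1-\beta_s$, normalized by $1-\beta_t$. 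The model puts mass $\alpha_s(1-\beta_{t|s})\langle\mathbf{x}_\theta,\mathbf{x}'\rangle/(1-\beta_t)$ on each clean $\mathbf{x}'$. On groups it puts mass $(\beta_s-\alpha_s)(1-\beta_{t|s})\langle Q^{(1)}\mathbf{x}_\theta,g'\rangle/(1-\beta_t)$, which requires the model's mid-level probability to be the pushforward $Q^{(1)}\mathbf{x}_\theta$. The KL is then
\[
-\frac{1-\beta_{t|s}}{1-\beta_t}\Big[\alpha_s\log\langle\mathbf{x}_\theta,\mathbf{x}\rangle+(\beta_s-\alpha_s)\log\langle Q^{(1)}\mathbf{x}_\theta,Q^{(1)}\mathbf{x}\rangle\Big].
\]
\end{itemize}
The mass on $\mathbf{m}$ cancels because it is the same under $q$ and $p_\theta$. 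Terms that do not depend on $\theta$ go into $C$.

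\textbf{Continuous limit.} Let $\Delta t\to0$:
\begin{itemize}
\item $1-\beta_{t|s}=(\beta_s-\beta_t)/\beta_s\approx-\beta_t'\Delta t/\beta_t$.
\item The $\mathcal{S}_1$ weight is $\alpha_s\beta_t/\beta_s-\alpha_t\approx\big(\alpha_t\beta_t'/\beta_t-\alpha_t'\big)\Delta t$.
\end{itemize}
After summing over steps and converting the sum to $\mathbb{E}_t$, the mask case yields the coefficients $\frac{\alpha_t\beta_t'}{\beta_t(1-\beta_t)}$ and $\frac{\beta_t'(\beta_t-\alpha_t)}{\beta_t(1-\beta_t)}$ exactly as in Eq.~\ref{eq:NEBLO_deterministic_mainbody}, with the sign absorbed by $\beta_t'<0$. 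For the $\mathcal{S}_1$ case this expansion gives the coefficient
\[
\frac{\alpha_t\beta_t'/\beta_t-\alpha_t'}{\beta_t-\alpha_t},
\]
which does not reduce to the stated $\frac{\alpha_t(\beta_t'-\alpha_t')}{\beta_t-\alpha_t}$ for general schedules. I would first recheck the posterior weight using the exact definitions $\alpha_{t|s}=\alpha_t/\alpha_s$ and $\beta_{t|s}=\beta_t/\beta_s$. If the mismatch persists, the theorem's coefficient is correct only under an additional normalization of the schedules (for instance via a conditional factorization $\alpha_t=\beta_t\gamma_t$), and the proof should state that assumption explicitly.

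\textbf{Main obstacle.} The hardest step is the Bayes computation for $\mathbf{z}_t=\mathbf{m}$ together with keeping the $\mathcal{S}_1$ coefficient consistent with the statement. In particular, one must justify that the parameterized model posterior over $\mathcal{S}_1$ is $Q^{(1)}\mathbf{x}_\theta$, since that is what makes the KL split into exactly two log terms. The rest is routine Taylor expansion of the schedules.
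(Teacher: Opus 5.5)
Your route is the paper's route: Bayes posteriors in the three cases $\mathbf{z}_t\in\mathcal{S}_0$, $\mathbf{z}_t\in\mathcal{S}_1$, $\mathbf{z}_t=\mathbf{m}$, a KL for each case, then $\Delta t\to 0$. Your posteriors coincide with the appendix's; its Case~2 numerator is exactly your $\alpha_s\beta_{t|s}-\alpha_t$. Restricting $\mathbf{x}_\theta$ to $\Phi^{-1}(\mathbf{z}_t)$ is equivalent to the paper's renormalization over the group, whose extra $\log\langle\mathbf{z}_t,Q^{(1)}\mathbf{x}_\theta\rangle$ then vanishes. Your two mask coefficients, with the sign supplied by $\beta_t'<0$, match the statement.

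The mismatch you flag in the $\mathcal{S}_1$ term is not a gap in your argument. Your coefficient $\frac{\alpha_t\beta_t'/\beta_t-\alpha_t'}{\beta_t-\alpha_t}$ is the correct one, and the stated $\frac{\alpha_t(\beta_t'-\alpha_t')}{\beta_t-\alpha_t}$ is not. Rechecking with the exact definitions only confirms your weight, because $\beta_{t|s}-\alpha_{t|s}=\beta_t/\beta_s-\alpha_t/\alpha_s\approx(\beta_t'/\beta_t-\alpha_t'/\alpha_t)\Delta t$. The stated coefficient is what one gets by linearizing this as $(\beta_t'-\alpha_t')\Delta t$, i.e., by dropping the divisions by $\beta_s$ and $\alpha_s$. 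The discrepancy enters in the paper's own limit step. There the weight is written as $\alpha_{t(i)}\beta_{t(i)|t(i-1)}-\alpha_{t(i)}$, with $\alpha_{t(i)}$ in place of $\alpha_{t(i-1)}$, and the integrand then becomes $\alpha_t(\beta_t'-\alpha_t')$.

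Two independent checks support your version.
\begin{itemize}
\item Write $\gamma_t=\alpha_t/\beta_t$. The forward chain masks at rate $-\beta_t'/\beta_t$ from both $\mathcal{S}_0$ and $\mathcal{S}_1$, and sends $\mathbf{x}\to\Phi(\mathbf{x})$ at rate $-\gamma_t'/\gamma_t$. So the reversed rate $\Phi(\mathbf{x})\to\mathbf{x}$ is $-\gamma_t'/(1-\gamma_t)$, the masked-diffusion weight of this inner absorbing chain, which equals your expression.
\item Take the paper's schedule $\alpha_t=1-t$, $\beta_t=1-t^2$. Your coefficient is $1/(t(1+t))$, while the stated one is $(1-2t)/t$. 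The stated one vanishes at $t=1/2$, where the exact discrete weight divided by $\Delta t$ tends to $4/3$. It is also positive for $t<1/2$, so the stated term $+\frac{1-2t}{t}\log\langle\mathbf{x}_\theta,\mathbf{x}\rangle$ is negative there, which is impossible for a limit of KL divergences.
\end{itemize}
The paper's sanity check $\beta_t=\alpha_t$ cannot detect this, since $\mathcal{S}_1$ is then never visited.

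So drop the fallback rather than searching for a normalization. The one you suggest does not reconcile the two: under $\alpha_t=\beta_t\gamma_t$ your coefficient becomes $-\gamma_t'/(1-\gamma_t)$, and the stated one becomes $\gamma_t(\beta_t'(1-\gamma_t)-\beta_t\gamma_t')/(1-\gamma_t)$.

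Conclude instead with the corrected identity. It keeps the two mask terms as stated, with the $\mathcal{S}_1$ term replaced by $-\frac{\alpha_t\beta_t'-\alpha_t'\beta_t}{\beta_t(\beta_t-\alpha_t)}\log\langle\mathbf{x}_\theta,\mathbf{x}\rangle\,\mathbb{I}[\mathbf{z}_t\in\mathcal{S}_1]$. Its coefficient is nonnegative because $Q_{t|s}$ has nonnegative entries only if $\beta_{t|s}\ge\alpha_{t|s}$, i.e., only if $\beta_t/\alpha_t$ is nondecreasing. This condition is stronger than the lemma's hypothesis $\alpha_t\le\beta_t$, and you should state it explicitly, since your posterior weights rely on it.
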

\end{tcolorbox}

We provide a proof in Appendix~\ref{app:proof_NELBO}. For a sanity check, one can observe that Eq.~\ref{eq:NEBLO_deterministic_mainbody} reduces to the NELBO of the original masked diffusion models when $\beta_t=\alpha_t$ (i.e, no $\mathcal{S}_1$). With Theorem~\ref{theorem: continuous time NELBO}, we can design a simple cross-entropy loss for HDDM training in a principled way. We empirically find that regularization loss in Eq.~\ref{eq:NEBLO_deterministic_mainbody} does not improve the performance, so we take the original loss in Eq.~\ref{eq:trainig_loss_digress}.

\subsection{Decoupled Atom Encoding}
\label{subsec:method_decoupled_atom_encoding}

Existing graph diffusion frameworks~\citep{vignac2022digress, xu2024discrete, qin2024defog} typically rely on a coarse atom encoding scheme, where node identities are determined solely by their atomic numbers. While this simplifies the encoding, we identify that this one-to-many mapping between atomic tokens and their physical states (e.g., protonation or aromaticity) causes the generative task to be ill-posed. As illustrated in Fig.~\ref{fig:reconstruction_failure} (Left), this leads to a systematic reconstruction failure in molecules requiring fine-grained atomic descriptors, such as specific nitrogen motifs found in drug-like scaffolds. Consequently, models using these coarse encodings suffer from a representational bias, struggling to generate essential motifs that are statistically prevalent in the training distribution (Fig.~\ref{fig:reconstruction_failure}, Right).

To resolve these representational gaps and ensure the model can generalize across diverse chemical spaces, we introduce Decoupled Atom Encoding (DAE). DAE expands atomic state space by explicitly encoding aromaticity and formal charge as primary node attributes. This results in a near-perfect reconstruction ratio both on the MOSES and GuacaMol dataset.
Furthermore, by providing the model with necessary structural priors, \textbf{MolHIT} successfully recovers the distribution of complex motifs such as pyrrolic nitrogen ([nH]), which baselines using coarse encoding struggle to capture (Fig.~\ref{fig:reconstruction_failure}, Right). Further details are in Appendix~\ref{app:DAE}.

\subsection{Forward and Reverse Process of MolHIT}

\paragraph{Forward process of MolHIT}
Since the atom and bond are perturbed independently throughout the forward process, we decouple their transition dynamics in graph diffusion. This flexibility is particularly advantageous for molecular graph modeling. We empirically observe that a uniform transition kernel is essential for edge generation, whereas HDDM yields superior performance for atom types compared to a uniform approach. Therefore, we employ an HDDM process for atoms and a uniform transition for edges, resulting in the following forward process dynamics:

\begin{align}
    & {Q}_{X,t}= {\alpha}_{X,t}\mathrm{I} + (\beta_{X,t}-\alpha_{X,t}){Q}_{X,t}^{(1)} + (1- \beta_{X,t}){Q}_{X,t}^{(2)}, \notag \\
    & {Q}_{E,t}= {\alpha}_{E,t}\mathrm{I} + (1-\alpha_{E,t})\mathbf{1}_{d_E}\mathbf{1}_{d_E}^{T}, 
\end{align}
Our preliminary experiments show robustness on the HDDM scheduler $\alpha_{X,t}$, $\beta_{X,t}$, and therefore we simply opt for linear schedule for $\alpha_{X,t}=\alpha_{E,t}=1-t$ and $\beta_{X,t}=1-t^2$ for the experiments.

\paragraph{Grouping strategy}
Given the mid-level states $\mathcal{S}_1$, HDDM allows for the design of arbitrary transition kernels from $\mathcal{S}_0$ to $\mathcal{S}_1$. We implement a deterministic grouping kernel that clusters atom elements based on their intrinsic chemical properties and aromaticity. For instance, in the MOSES dataset, we partition 12 atom types into four semantic groups: $\{C\}$, $\{N, O, S\}$, $\{F, Cl, Br\}$, and $\{c, o, n, nH, s\}$. This hierarchical structure simplifies the initial stages of diffusion by focusing on broad chemical categories before refining specific identities. We extend this strategy to other datasets, such as GuacaMol, by adapting the groupings to their respective atom vocabularies. Full details of these partitions are provided in Appendix~\ref{app:grouping_HDDM_details}


\begin{figure}[t]
\vspace{-0.15in}
\centering
\begin{minipage}{1.0\linewidth}
\renewcommand{\baselinestretch}{1.1}
\begin{algorithm}[H]
\caption{PN-sampler with temperature sampling}
\label{alg:pn_sampler}
\begin{algorithmic}[1]
\STATE \textbf{Input:} Sample size $S$, Timesteps $T$, Temperature $\tau$, Nucleus threshold $p$
\FOR{$i = 1$ to $S$}
    \STATE Sample $N \sim P_{\text{train}}(N)$
    \STATE $G_T \sim p_T(G_T)$ \COMMENT{$G = (\mathbf{X}, \mathbf{E})$}
    \FOR{$t = T$ down to $\Delta t$ with step $\Delta t$}
        \STATE $\hat{p}_0(\mathbf{X}), \hat{p}_0(\mathbf{E}) \leftarrow f_\theta(G_t, t)$
        \STATE $\hat{p}'_0(\mathbf{X}) \leftarrow \text{TopP}(\text{Softmax}(\hat{p}_0(\mathbf{X}) / \tau), p)$ 
        \STATE $\hat{\mathbf{X}}_0 \sim \text{Categorical}(\hat{p}'_0(\mathbf{X}))$
        \STATE $\hat{\mathbf{E}}_0 \sim \text{Categorical}(\hat{p}_0(\mathbf{E}))$
        \STATE $G_{t-\Delta t} \sim q(G_{t-\Delta t} | \hat{G}_0)$ where $\hat{G}_0 = (\hat{\mathbf{X}}_0, \hat{\mathbf{E}}_0)$
    \ENDFOR
    \STATE \textbf{Store} $G_{\text{final}}$
\ENDFOR
\end{algorithmic}
\end{algorithm}
\end{minipage}
\vspace{-0.15in}
\end{figure}

\paragraph{Project and Noise (PN-sampler)}
Due to the standard ELBO guarantee as we prove in Theorem~\ref{theorem: continuous time NELBO}, one can sample from the original posterior update as in prior works~\citep{austin2021structured}. 
While standard posterior updates follow the transition $q(G_{t-\Delta t} | G_t, G_0)$ as justified by the ELBO guarantee in Theorem~\ref{theorem: continuous time NELBO}, we empirically find that this approach often restricts the structural exploration necessary for complex molecular generation. To address this, we design a Project-and-Noise (PN) sampler. PN sampler projects model's denoising prediction $p_{\theta}(G_0|G_t)$ onto the clean manifold $\mathcal{M}$ (one-hot vector) via categorical sampling to obtain a discrete candidate $\hat{G}_0$. This candidate is then directly re-noised to the preceding timestep $s = t - \Delta t$ using the cumulative transition kernel $Q_t$, effectively bypassing posterior constraints of $G_t$ to encourage greater diversity in the generated graph. The overall algorithm is illustrated in Alg.~\ref{alg:pn_sampler}.

\begin{table*}[t!]
\centering
\caption{Comprehensive MOSES benchmark results. Scaffold Novelty (Scaf-Novel) measures the ratio of novel scaffold molecules to the number of generated molecules, while Scaffold Retrieval (Scaf-Ret.) quantifies test scaffold retrievals. All of the results are the averaged value over 3 runs of 25,000 samples. Bold denotes the best in each category, and underline indicates SOTA performance within the 2D Graph models. Empty values are due to the absence of publicly available checkpoints or samples.
}
\label{tab:moses_comprehensive}
\resizebox{\textwidth}{!}{
\renewcommand{\arraystretch}{0.98}
\renewcommand{\tabcolsep}{5pt}
\small
\begin{tabular}{l l | c | c c  | c c c c c c c}
\toprule
Category & Model  & Quality $\uparrow$ & Scaf-Novel $\uparrow$ &Scaf-Ret. $\uparrow$ & Valid $\uparrow$ & Unique $\uparrow$ & Novel $\uparrow$ & Filters $\uparrow$ & FCD $\downarrow$ & SNN $\uparrow$ & Scaf $\uparrow$ \\

\midrule
- & Training set &  95.4 & \textemdash & \textemdash & 100.0 & 100.0 & \textemdash & 100.0 & 0.48 & 0.59 & - \\

\midrule
1D Sequence & VAE\tabcite{kingma2013auto} 
& 92.8 & 0.22 & 0.031
& 97.7 & 99.7 & 69.5 & \textbf{99.7} & 0.57 & 0.58 & 5.9 \\

& CharRNN~\tabcite{segler2018generating}  
& 92.6 & 0.29 & \textbf{0.035}  
& 97.5 & 99.9 & 84.2 & 99.4 & \textbf{0.52} & 0.56 & 11.0 \\

& SAFE-GPT \tabcite{noutahi2024gotta} 
& 92.8 & 0.12 & 0.015  
& \textbf{99.8} & 98.9 & 43.7 & 97.7 & 0.72 & 0.57 & 6.3 \\

& GenMol \tabcite{lee2025genmol}  
& 62.1 & 0.05 & 0.012  
& 99.7 & 64.0 & 68.9 & 98.1 & 16.4 & \textbf{0.64} & 1.6 \\

\midrule
2D Graph 
& DiGress~\tabcite{vignac2022digress} 
& 82.5 & 0.26 & 0.031  
& 87.1 & \textbf{100.0} & 94.2 & 97.5 & 1.25 & 0.53 & 12.8 \\

& DisCo~\tabcite{xu2024discrete}  
& - & - & - 
& 88.3 & 100.0 & \textbf{\underline{97.7}} & 95.6 & 1.44 & 0.50 & 15.1 \\

& Cometh~\tabcite{siraudin2024cometh}  
& 82.1 & 0.36 & 0.023 
& 87.2 & 100.0 & 96.4 & 97.3 & 1.44 & 0.51 & \textbf{\underline{16.8}} \\

& DeFoG~\tabcite{qin2024defog}  
& 88.5 & 0.26 & 0.031  
& 92.8 & 99.9 & 92.1 & \underline{98.9} & 1.95 & 0.55 & 14.4 \\
\cmidrule{2-12}
& \textbf{MolHIT}  
& \textbf{\underline{94.2}} & \textbf{\underline{0.39}} & \underline{0.033} 
& \underline{99.1} & 99.8 & 91.4 & 98.0 & \underline{1.03} & \underline{0.55} & 14.4 \\
\bottomrule
\end{tabular}}
\vspace{-0.1in}
\end{table*}

\paragraph{Temperature sampling}

While temperature and top-$p$ sampling have become standard techniques for managing the quality-diversity trade-off in generative domains~\citep{holtzman2019curious, ficler2017controlling, hashimoto2019unifying}, their application to molecular graph generation remains largely unexplored. We evaluate the impact of these sampling strategies and demonstrate that our PN-sampler effectively controls this trade-off. We empirically find that temperature sampling can be naturally adopted for PN-sampler, where doing temperature sampling only for the atom prediction (line 7 in Alg.~\ref{alg:pn_sampler}) results in the best performance.

\subsection{Conditional Modeling}
\label{sec:method_conditional_modeling}
To enable conditional modeling, we train a conditional model by modifying the original graph transformer architecture in DiGress~\citep{vignac2022digress} by adding adaptive layer normalization (adaLN) for node attention only. For sampling, we adopt classifier-free guidance (CFG)~\citep{ho2022classifier}. We provide the details in Appendix~\ref{app:exp_details_cond_graph_transformer}.
\section{Experiments}
\label{sec:experiments}

We evaluate \textbf{MolHIT} on two large-scale molecular datasets: MOSES~\citep{polykovskiy2020molecular} and Guacamol~\citep{brown2019guacamol}. The MOSES dataset consists of 1.9M molecules containing 7 heavy atom types, which we augment into 12 tokens using DAE (Sec.~\ref{subsec:method_decoupled_atom_encoding}). Similarly, the GuacaMol~\citep{brown2019guacamol} dataset, which originally contains 12 heavy atom types, is decoupled into 56 tokens via DAE. For the model architecture, we utilize the original graph transformer from DiGress~\cite{vignac2022digress}, maintaining the same model size. All reported results represent the average of three independent runs, and standard deviations are provided in Appendix~\ref{app:full_stats_results}.

\subsection{Unconditional Generation on MOSES}

\paragraph{Evaluation} 
Following previous graph diffusion works~\citep{vignac2022digress, qin2024defog}, we measure with official benchmarks for Moses~\citep{polykovskiy2020molecular} which includes 7 metrics: Validity (\%), Uniqueness (\%), Novelty (\%), Filters (\%), FCD, SNN, Scaf. We also measure Quality~\cite{lee2025genmol}, which is defined by the proportion of molecules that are valid, unique, synthetic accessibility (SA~\citep{bickerton2012quantifying} $\leq$ 4), and drug-like (QED~\citep{ertl2009estimation} $\geq$ 0.6). Formal definitions of the metrics are provided in Appendix~\ref{app:moses_evaluation_metrics}.

\begin{table}[t]
\centering
\caption{Full GuacaMol benchmark results using unfiltered dataset. Metric abbreviations: Val. (Validity), V.U. (Unique), V.U.N. (Novel). DiGress (org.) is original DiGress trained with filtered dataset and DiGress (full) values are from the re-implementation of DiGress on unfiltered, full GuacaMol dataset.}
\label{tab:guacamol_main}
\vspace{-0.05in}
\resizebox{\columnwidth}{!}{ 
\renewcommand{\arraystretch}{0.98}
\renewcommand{\tabcolsep}{7pt}
\begin{tabular}{l ccccc}
\toprule
Model & Val. & V.U. & V.U.N. & KL Div. & FCD \\
\midrule
Training set & 100.0 & 100.0 & \textemdash & 99.9 & 92.8 \\
\midrule
DiGress~\tabcite{vignac2022digress} (org.) & 85.2 & 85.2 & 85.1 & 92.9 & \textbf{68.0} \\
DiGress~\tabcite{vignac2022digress} (full) 
& 74.7 & 74.6 & 74.0 & 92.4 & 61.1 \\
DiGress + DAE & 65.2 &  65.2 & 64.9 & 87.0 & 49.2 \\
\midrule
\textbf{MolHIT (Ours)} & \textbf{87.1} & \textbf{87.1} & \textbf{86.0} & \textbf{96.7} & 54.9 \\
\bottomrule
\end{tabular}
}
\vspace{-0.1in}
\end{table} 

\begin{table*}[t]
\centering
\caption{Multi-property guided generation on MOSES with four different conditions. We report mean absolute error (MAE), Pearson correlation (Pearson $r$), and validity. Avg. denotes the macro-average across four properties. Bold denotes best performances. All results are the averaged value over 3 runs of 10,000 samples.}
\label{tab:multi_property}
\vspace{-0.05in}
\resizebox{\textwidth}{!}{%
\renewcommand{\arraystretch}{0.98}
\renewcommand{\tabcolsep}{10pt}
\begin{tabular}{l
  c c c c c
  c c c c c
  c
}
\toprule
& \multicolumn{5}{c}{MAE $\downarrow$} & \multicolumn{5}{c}{Pearson $r \uparrow$} & {Validity (\%) $\uparrow$} \\
\cmidrule(r){2-6} \cmidrule(r){7-11} \cmidrule(l){12-12}
Method
& {QED} & {SA} & {logP} & {MW} & {\textbf{Avg.}}
& {QED} & {SA} & {logP} & {MW} &{\textbf{Avg.}}
& {} \\
\midrule
Marginal   
& 0.117 & 0.115 & 0.067 & 0.272 & 0.143
& 0.489 & 0.570 & 0.802 & 0.396 & 0.564
& 75.03 \\
Marginal + DAE  
& 0.107 & 0.094 & 0.061 & 0.227 & 0.122
& 0.565 & 0.559 & 0.836 & 0.437 & 0.599
& 87.85 \\
\midrule
\textbf{MolHIT (Ours)}  
& \bfseries 0.061 & \bfseries 0.040 & \bfseries 0.049 & \bfseries 0.081 & \bfseries 0.058
& \bfseries 0.804 & \bfseries 0.790 & \bfseries 0.950 & \bfseries 0.685 & \bfseries 0.807
& \bfseries 96.31 \\
\bottomrule
\end{tabular}
}
\vspace{-0.15in}
\end{table*}

\paragraph{Scaffold novelty metrics}
While the standard MOSES benchmark provides a foundation for evaluating molecular generative models, simple metrics like novelty may not reflect the capability for generating new molecules. For instance, a high novelty score itself can come from merely generating novel-looking noise outside the manifold of drug-like molecules, while high uniqueness may not reflect true structural diversity if the model is trapped in a narrow chemical subspace. 
To address this, we introduce two metrics given $n_{total}$ generated molecules: (1) Scaffold Novelty $= |\mathcal{S}_{\text{gen}} \setminus \mathcal{S}_{\text{train}}| / n_{\text{total}}$, which quantifies the efficiency of structural extrapolation; and (2) Scaffold Retrieval $= |\mathcal{S}_{\text{gen}} \cap \mathcal{S}_{\text{test}}| / n_{\text{total}}$, which measures distributional fidelity. Further details are in Appendix~\ref{app:structure_novelty_metric}.

\paragraph{Baselines} For graph generative models, we compare with DiGress~\citep{vignac2022digress}, DisCo~\citep{xu2024discrete}, Cometh~\citep{siraudin2024cometh}, DeFoG~\citep{qin2024defog}, which are previous SOTA in atom-level graph diffusion. We also compare with 1D baselines; VAE~\citep{kingma2013auto}, Char-RNN~\citep{segler2018generating}, SAFE-GPT~\citep{noutahi2024gotta}, GenMol~\citep{lee2025genmol}.

\paragraph{Result}
As shown in Table~\ref{tab:moses_comprehensive}, MolHIT significantly outperforms previous graph-based baselines across nearly all key metrics, including Quality, Validity, FCD, and Scaffold Novelty. While 1D sequence-based models (SAFE-GPT, GenMol) excel in Validity, they exhibit a clear tendency toward memorization, evidenced by their lower Scaf-Novelty and novelty scores. On the other hand, MolHIT achieves a new state-of-the-art both for Quality ($94.2\%$) and Scaffold Novelty ($0.39$) while achieving near perfect validity score ($99.1$). The above results validate that MolHIT effectively navigates the valid drug-like manifold without sacrificing its ability to explore novel chemical space.

\begin{table}[t!]
\centering
\caption{Scaffold extension results on the MOSES dataset. Results are averaged over 3 runs of 10,000 targets. 
}
\label{tab:scaffold_extension}
\vspace{-0.05in}
\resizebox{\linewidth}{!}{
\renewcommand{\arraystretch}{1.05}
\renewcommand{\tabcolsep}{3pt}
\begin{tabular}{l c c c c} 
\toprule
Model & Validity (\%) $\uparrow$  & Diversity $\uparrow$ & Hit@1 $\uparrow$ & Hit@5 $\uparrow$ \\
\midrule
DiGress & 50.8 & 44.8 & 2.07 & 6.41 \\
Marginal + DAE & 64.8 & \textbf{58.0} & 1.67 & 6.37 \\
\textbf{MolHIT (Ours)} & \textbf{83.9} & 57.4 & \textbf{3.92} & \textbf{9.79} \\
\bottomrule
\end{tabular}
}
\vspace{-0.1in}
\end{table}

\subsection{Unconditional Generation on GuacaMol}
\paragraph{Setup}
Compared to MOSES where molecules contain charged atoms are filtered, GuacaMol benchmark~\citep{brown2019guacamol} contains a broader chemical space, including compounds with formal charges that are not eliminated by neutralization. Previous atom encoding ~\citep{vignac2022digress} fails to reconstruct these properties (Fig.~\ref{fig:dae_reconstruction_comparison_guacamol}), and they train model only with a manually filtered dataset which are failed to be reconstructed. This helps improve the validity measure, but making models learn from the imperfect, biased distribution. In contrast, we utilize the full GuacaMol dataset for training to evaluate the robustness of our model. We run 3 run of generating 10,000 samples for each experiment.

\paragraph{Results}
Table~\ref{tab:guacamol_main} shows that MolHIT achieves the highest performance among all metrics except FCD. For FCD, the strong performance of original DiGress without DAE indicates that using DAE does not always lead to the generative task being easier since it can be hard to model with differentiate extended atom vocabulary. However, as in Appendix~\ref{app:DAE_in_guacamol}, we find that using DAE substantially increase the amount of molecules having charged or special atoms, which is not rare in the GuacaMol. Note that the original DiGress is trained for 1,000 epochs, while our results are from training only with 40 epochs, so further training will improve the metrics.

\subsection{Multi-property guided generation}
Generating molecules with targeted chemical properties is important for practical applications in materials science and drug discovery. For this, we evaluate the capacity of MolHIT under the multi-conditional generation scenario.

\paragraph{Setup}
We train a conditional graph transformer (Sec.~\ref{sec:method_conditional_modeling}) on the MOSES dataset, labeled with four key chemical properties: Quantitative Estimate of Drug-likeness (QED), Synthetic Accessibility (SA), Molecular Weight (MW), and the lipophilicity (logP). We utilize RDKit~\citep{rdkit}, an open-source cheminformatics toolkit, for all property labeling and condition evaluation.

\paragraph{Evaluation}
For inference, we generate 10,000 samples conditioned on target properties of the molecules that are randomly sampled from the test split. We measure Mean Absolute Error (MAE) and the Pearson correlation coefficient ($r$) for conditioning and validity for the structural fidelity of the samples. We compare MolHIT against two baselines: (1) Marginal transition (effectively a DiGress without a geometric prior) and (2) Marginal transition with a DAE (incorporating decoupled atom encoding into the marginal transition baseline).

\begin{table}[t]
\centering
\caption{Incremental performance gains on the MOSES dataset by integrating DAE, the PN Sampler, and HDDM into the DiGress.
}
\label{tab:ablation_component_anlaysis}
\vspace{-0.05in}
\resizebox{\linewidth}{!}{%
\renewcommand{\arraystretch}{0.98}
\renewcommand{\tabcolsep}{12pt}
\begin{tabular}{lccc}
\toprule
Method & Quality $\uparrow$ & FCD $\downarrow$ & Validity (\%) $\uparrow$ \\
\midrule
DiGress~\tabcite{vignac2022digress}  & 82.5 & 1.25 & 87.1 \\
+ DAE    & 87.6 & \textbf{0.89} & 96.2 \\
+ PN Sampler              & 92.9 & 1.65 & \textbf{99.4} \\
+ HDDM (\textbf{MolHIT})           & \textbf{94.2} & 1.03 & 99.1 \\
\bottomrule
\end{tabular}}
\vspace{-0.15in}
\end{table}

\paragraph{Results} Table~\ref{tab:multi_property} shows that MolHIT significantly outperforms all baselines across every metric. For conditioning precision, MolHIT achieves a macro-averaged MAE of 0.058, a 52.4\% reduction compared to the Marginal+DAE baseline. MolHIT also exhibits high reliability, reaching a Pearson $r$ of 0.807 on average, including a near-perfect 0.950 for $\log P$ and 0.804 for QED. The results also show this improved conditioning does not come with the cost of lower validity, where MolHIT achieves validity higher than 95\%, outperforming baselines with a large gap.
We provide more experimental details of the multi-property guided generation in Appendix~\ref{app:exp_details_multiprop}.

\begin{figure}[t] 
    \centering
    \includegraphics[width=\columnwidth]{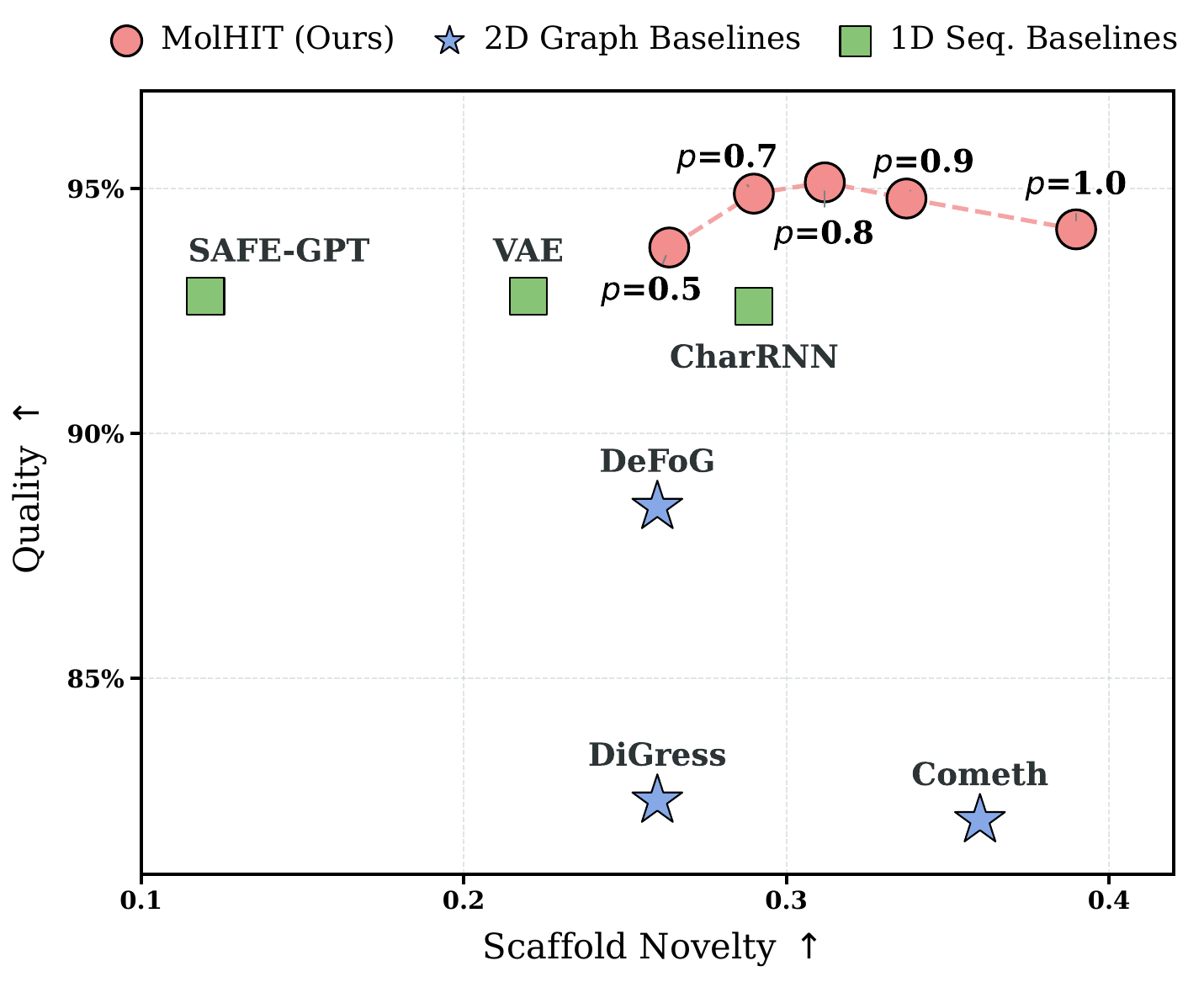}
    \vspace{-0.25in}
    \caption{Effect of top-p sampling in MolHIT.}
    \label{fig:top_p_ablation}
    \vspace{-0.1in}
\end{figure}

\subsection{Scaffold Extension}

\paragraph{Setup}
We evaluate pretrained unconditional model's generative capability when conditioned on a given substructure. For this, we use Bemis-Murcko scaffold~\citep{bemis1996properties} of the molecules in the test split and treat fix this part during the diffusion sampling. For each experiment, we utilize 10,000 unique target scaffolds, generating multiple candidates per target to assess the model's distributional coverage. Specifically, we measure the Hit@1 and Hit@5 ratios, which are the probability that the ground-truth extension is recovered within the top $k$ samples along with standard metrics of validity and diversity. Further experimental setup is provided in Appendix~\ref{app:exp_scaffold_extension}.

\paragraph{Result}
Table~\ref{tab:scaffold_extension} shows that MolHIT significantly outperforming DiGress in all metrics. Interestingly, applying DAE to DiGress improves validity and diversity while reducing in Hit@1, which may due to the extended expressivity of the model. However, DAE results in higher diversity, which results in matched Hit@5 ratio for original DiGress.

\subsection{Ablation Studies}

\paragraph{Component analysis}
To show the contribution of each component on MolHIT's performance, we conduct an ablation study by testing on the Moses dataset. The result in Table~\ref{tab:ablation_component_anlaysis} shows that our atom encoding method (DAE), sampler (PN sampler), and diffusion algorithm (HDDM) all contributes to get to the highest value of Quality, FCD, and Validity among graph diffusion models.

\paragraph{Effect of temperature sampling}
In Fig.~\ref{fig:top_p_ablation}, we analyze MolHIT trained on the MOSES dataset across a range of top-$p$ values. Our results demonstrate that as top-$p$ decreases, a clear trade-off emerges between sample quality and scaffold novelty. Specifically, lowering the top-$p$ value from 1.0 down to 0.8 consistently improves the quality and validity of generated structures, while further reducing the $p$ threshold leads to a sharp decline in both chemical metrics and structural diversity. Notably, when sampling with top-$p$, \textbf{MolHIT} achieves a high validity of 99.4\% and a quality score of 95.1\%, demonstrating the effectiveness of the nucleus sampling in \textbf{MolHIT}.

\section{Related Works}

\paragraph{Discrete diffusion models}
Along with the success of continuous diffusion models~\citep{ho2020denoising, song2020score}, discrete diffusion models formulate a noise process within a discrete state space. \citet{hoogeboom2021argmax} investigate uniform transition of the discrete diffusion models, while D3PM~\cite{austin2021structured} explore different types of transition mechanism which include absorbing transition. Recently and independently developed alongside our work, \citet{zhou2025next} propose a hierarchical discrete diffusion approach to language modeling. While similar in spirit, our HDDM is derived from a semigroup-consistent family of closed-form transition kernels $Q^{(1)}, Q^{(2)}$ parameterized by explicit diffusion scheduler $\alpha_t,\beta_t$ while \citet{zhou2025next} is developed in the CTMC framework~\citep{campbell2022continuous}. Moreover, HDDM supports an arbitrary row-stochastic projection $\mathbf{\Phi}$, which generalizes the deterministic hierarchical mapping used in \citet{zhou2025next}. 

\paragraph{Diffusion models for molecular generation}
Various diffusion models and its techniques have been applied for molecular graph generation. GDSS~\citep{jo2022score} formulate continuous diffusion modeling through the system of SDE with a score matching objective. DiGress~\citep{vignac2022digress} utilize primary form of discrete diffusion models with uniform-style transition with data dependent prior. \citet{siraudin2024cometh, xu2024discrete, qin2024defog} apply CTMC framework as in~\citet{campbell2022continuous} to further boost the performance. Another axis for molecular generation is to model 1D sequence. SAFE-GPT~\citep{noutahi2024gotta} trains an Autoregressive model with their unique representation of molecule while GenMol~\citep{lee2025genmol} adopts masked diffusion framework in a wide range of drug discovery tasks.
We defer further related works in Appendix~\ref{app:further_related_works}.
\label{sec:related_works}
\vspace{-10pt}

\section{Conclusion}
\label{sec:conclusion}
In this work, we present MolHIT, a novel molecular diffusion model with a hierarchical discrete diffusion framework. Our algorithm results in state-of-the-art performance in large molecular datasets. It unlocks new capacity for end-to-end atom-level molecular generation, directly generating atoms with formal charges or explicit nH for the first time, moving us towards more realistic molecule generation. 

\section*{Impact Statement}
This paper presents work whose goal is to advance the field of molecule generation. We hope our work accelerates the discovery of useful drugs and materials, improving human lives. However, one might maliciously use our model to generate harmful substances to humans and environments.

\bibliographystyle{plainnat}
\bibliography{reference}

\clearpage

\newpage
\appendix
\onecolumn

\let\addcontentsline\oldaddcontentsline
\renewcommand{\contentsname}{Table of contents}
\tableofcontents

\section{Limitation and future directions}
\label{app:Broader_impact_and_limitations}

\paragraph{Limitations}
While our models improve the traditional diffusion based molecular generation, we have not tested with the model size increase or architectural improvement, in which we believe have further room for better performance. Moreover, we have not fully trained the model until performance saturation on GuacaMol dataset and we believe performance improvement with further training.

\paragraph{Future directions}
There are many interesting future directions. One is to apply Hierarchical Discrete Diffusion Models to the language domains~\citep{sahoo2024simple}, or image models~\citep{chang2022maskgit} and combining with different sampling schemes for diffusion models~\citep{jung2024conditional, park2025temporal, wu2025fast, kim2025klass}. Another direction is to further improve MolHIT's framework with more advanced tokenization incorporating motifs or functional groups~\citep{jin2018junction} and apply into the 3D molecular generation~\citep{hoogeboom2022equivariant, xu2023geometric} and proteins~\citep{gruver2023protein}.

\section{Further background}
\label{app:Further_background}

\subsection{Further related works}
\label{app:further_related_works}

\paragraph{Further backgrounds on discrete diffusion models}
Recently, Masked Language Models are actively studied due to its simple form and potential of bi-directional modeling~\citep{devlin2019bert}. \citet{campbell2022continuous} formulate the discrete diffusion using Continuous Time Markov Chain (CTMC) framework and propose correction sampler leveraging tau-leaping. SEDD~\citep{lou2023discrete} introduce score entropy loss in analogous to the score matching loss in the continuous diffusion models and show scalability in language modeling. Recently, masked diffusion models is further simplified~\citep{sahoo2024simple,ou2024your,shi2024simplified} and reaching to the level that is comparable to the standard AR modeling in large scale~\citep{nie2024scaling, bie2025llada2} and even in multi-modal setting~\citep{yang2025mmada}.

\paragraph{Conditional generation with discrete diffusion}
For conditional generation, \citet{liu2024graph} propose graph diffusion transformer for multi-conditional generation on polymer dataset and shows the effectiveness of the classifier-free guidance. \citet{schiff2024simple} propose simple mechanism for conditional sampling which is in analogous with CFG in continuous diffusion.

\subsection{Details of masked diffusion models}

\paragraph{Marginal transition}
\label{app:further_backgrounds_marginal_transition}
Let $X=(X_1,\dots,X_N)$ be a discrete random vector with $X_k\in\{1,\dots,K\}$, and let $P$ denote the empirical data distribution on $\{1,\dots,K\}^N$.
Define the family of product of distributions:
\begin{equation}
\mathcal{C}
\;=\;
\left\{
q:\ q(x)=\prod_{k=1}^N q_k(x_k),\ \ q_k\in\Delta^{K-1}
\right\},
\end{equation}
where $\Delta^{K-1}$ is the $(K\!-\!1)$-simplex.

Then, one can define the marginal terminal distribution $\pi$ as the product of the data marginals:
\begin{equation}
\pi(x)
\;=\;
\prod_{k=1}^N \pi_k(x_k),
\qquad
\pi_k(a)
\;=\;
\mathbb{P}_{X\sim P}\!\left[X_k=a\right],
\ \ a\in\{1,\dots,K\}.
\end{equation}

Equivalently, $\pi$ is the (unique) KL projection of $P$ onto $\mathcal{C}$:
\begin{equation}
\pi
\;=\;
\arg\min_{q\in\mathcal{C}}
\mathrm{KL}\!\left(P\,\|\,q\right).
\end{equation}

Then, marginal transition defines forward process of discrete diffusion as follows:

\begin{equation}
    q(\mathbf{x}_t|\mathbf{x}_0) = \mathrm{Cat}(\mathbf{x}_t; \bar{\alpha}_t \mathbf{x}_0 + (1-\bar{\alpha}_t)\mathbf{\pi}).
\end{equation}

\section{Mathematical derivations}
\label{app:Mathematical_derivations}

\subsection{Generalized HDDM forward process}
\label{app:proof_of_closdeform_lemma}

We now derive a generalized forward process that incorporates arbitrary multi-level hierarchies. Let $\mathcal{T}$ be the total discrete state space with dimension $D = \sum_{k=0}^n K_k$. We partition $\mathcal{T}$ into $n+1$ disjoint subsets $\mathcal{S}_0, \mathcal{S}_1, \dots, \mathcal{S}_n$, where $\mathcal{S}_0$ represents the clean atomic states (with $|\mathcal{S}_0| = K_0$) and $\mathcal{S}_k$ represents the $k$-th level of intermediate hierarchical states (with $|\mathcal{S}_k| = K_k$). We further define the cumulative subspace up to level $i$ as $\mathcal{T}_i := \bigcup_{k=0}^i \mathcal{S}_k$. For each hierarchical stage $i \in \{1, \dots, n\}$, we define the transition kernel as a row-stochastic matrix $\mathbf{\Phi}_i \in [0, 1]^{|\mathcal{T}_{i-1}| \times K_i}$. This kernel encodes the probabilistic mapping from the cumulative lower-level states in $\mathcal{T}_{i-1}$ to the specific higher-level states in $\mathcal{S}_i$. To characterize the evolution in full space $\mathcal{T}$, we induce a global transition matrix $Q^{(i)} \in [0, 1]^{D \times D}$ on $\mathcal{T}$ which embeds the local kernel $\mathbf{\Phi}_i$ into the full space as follows:

\begin{equation}
Q^{(i)}(\mathbf{x}_{\text{next}} \mid \mathbf{x}) = 
\begin{cases} 
\mathbf{\Phi}_i(\mathbf{x}_{\text{next}} \mid \mathbf{x}) & \text{if } \mathbf{x} \in \mathcal{T}_{i-1} \text{ and } \mathbf{x}_{\text{next}} \in \mathcal{S}_i, \\ 
1 & \text{if } \mathbf{x} \in \mathcal{T} \setminus \mathcal{T}_{i-1} \text{ and } \mathbf{x}_{\text{next}} = \mathbf{x}, \\ 
0 & \text{otherwise}. 
\end{cases} 
\end{equation}

In matrix notation, $Q^{(i)}$ forms a block structure where the transitions from $\mathcal{T}_{i-1}$ are governed by $\mathbf{\Phi}_i$, while the remaining diagonal blocks form an identity matrix. Under this formulation, each $Q^{(i)}$ represents the probabilistic projection onto the $i$-th hierarchical level, enabling us to design the diffusion forward process with the following lemma:

\begin{proposition}
\label{proposition:generalized_forward_process}
Suppose monotonically decreasing functions $\alpha_t^{(i)}:=\alpha_i(t)$ ($i=1,2,...,n$) defined in $0\leq t\leq 1$ are satisfying $0\leq\alpha_t^{(1)}\leq\cdots\leq\alpha_t^{(n)}\leq1$ and boundary conditions $\alpha_0^{(i)}=1, \alpha_{1}^{(i)}=0$ for all $i$. We define the transition matrix from timestep $s$ to timestep $t$ ($s\leq t)$ as:
\begin{equation}
\label{eq:general_forward_kernel_t_from_s}
Q_{t|s} := \alpha_{t|s}^{(1)}I + (\alpha_{t|s}^{(2)}-\alpha_{t|s}^{(1)})Q^{(0)}+\cdots+(1-\alpha_{t|s}^{(n)})Q^{(n)},
\end{equation}
where $\alpha_{t|s}^{(i)}=\frac{\alpha_t^{(i)}}{\alpha_s^{(i)}}$ for every $i$. Then, transition kernel defined by Eq.~\ref{eq:general_forward_kernel_t_from_s} satisfies Chapman–Kolmogorov consistency~\citep{durrett2019probability} as follows:
\begin{equation}
\label{eq:Chapman-Komogorov_consistency}
    Q_{t|s}Q_{s|r} = Q_{t|r} \;\; \forall{0\leq r \leq s \leq t \leq 1.}
\end{equation}

Moreover, one could represent cumulative forward transition from initial timestep $0$ to $t$ in the following form:
\begin{equation}
    Q_t = \alpha_t^{(1)}\mathrm{I} + (\alpha_t^{(2)}-\alpha_t^{(1)})Q^{(1)} + \cdots (1-\alpha_t^{(n)})Q^{(n)}.
\end{equation}
\end{proposition}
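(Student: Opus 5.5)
The plan is to reduce the Chapman–Kolmogorov identity to a small multiplication table for the operators $Q^{(i)}$, and then match coefficients by telescoping. I read the term $Q^{(0)}$ in Eq.~\ref{eq:general_forward_kernel_t_from_s} as the typo it appears to be for $Q^{(1)}$. Set $Q^{(0)}:=\mathrm{I}$ and $\alpha^{(n+1)}_t:\equiv 1$. Then
\begin{equation*}
Q_{t|s}=\sum_{k=0}^{n} c_k(t|s)\,Q^{(k)},\qquad c_k(t|s):=\alpha^{(k+1)}_{t|s}-\alpha^{(k)}_{t|s},\qquad c_0(t|s):=\alpha^{(1)}_{t|s}.
\end{equation*}
Note that $\sum_{k\le m}c_k(t|s)=\alpha^{(m+1)}_{t|s}$.

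\textbf{Step 1: multiplication table.} I will show $Q^{(i)}Q^{(j)}=Q^{(\max(i,j))}$ for all $0\le i,j\le n$, under the paper's row-vector convention $\mathbf{x}Q$.
\begin{itemize}
\item \emph{Idempotence.} $Q^{(i)}$ sends $\mathcal{T}_{i-1}$ into $\mathcal{S}_i$ and fixes everything outside $\mathcal{T}_{i-1}$, in particular $\mathcal{S}_i$. Hence $Q^{(i)}Q^{(i)}=Q^{(i)}$.
\item \emph{Case $i>j$.} Applying $Q^{(i)}$ first lands in $\mathcal{S}_i$ or leaves the state outside $\mathcal{T}_{i-1}$. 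Both regions lie outside $\mathcal{T}_{j-1}$, so the subsequent $Q^{(j)}$ acts as the identity. This gives $Q^{(i)}Q^{(j)}=Q^{(i)}$.
\item \emph{Case $i<j$.} $Q^{(i)}$ moves $\mathcal{T}_{i-1}$ into $\mathcal{S}_i\subset\mathcal{T}_{j-1}$, and then $Q^{(j)}$ pushes it into $\mathcal{S}_j$. The product equals $Q^{(j)}$ exactly when the rows of $\mathbf{\Phi}_j$ indexed by $\mathcal{T}_{i-1}$ equal $\mathbf{\Phi}_i$ composed with $\mathbf{\Phi}_j|_{\mathcal{S}_i}$.
\end{itemize}

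\textbf{Main obstacle.} The last compatibility condition is the step I expect to be the crux, because it is not automatic for arbitrary $\mathbf{\Phi}_i$. It does hold in the cases of interest:
\begin{itemize}
\item in Lemma~\ref{lemma:HDDM_forward_closed_form}, where $n=2$ and $\mathbf{\Phi}_2$ is the all-ones column into the single mask state;
\item whenever the top level is a single absorbing state;
\item whenever the $\mathbf{\Phi}$'s are deterministic nested projections.
\end{itemize}
I will therefore state it explicitly as the consistency hypothesis implicit in the construction, and verify it for the two-level HDDM.

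\textbf{Step 2: coefficient matching.} Expanding with Step 1 gives
\begin{equation*}
Q_{t|s}Q_{s|r}=\sum_{i,j}c_i(t|s)c_j(s|r)Q^{(\max(i,j))}.
\end{equation*}
The coefficient of $Q^{(k)}$ is the sum over pairs with $\max(i,j)=k$. This equals the product of the partial sums up to $k$ minus the product of the partial sums up to $k-1$:
\begin{equation*}
\alpha^{(k+1)}_{t|s}\alpha^{(k+1)}_{s|r}-\alpha^{(k)}_{t|s}\alpha^{(k)}_{s|r}=\alpha^{(k+1)}_{t|r}-\alpha^{(k)}_{t|r},
\end{equation*}
since $\alpha^{(k)}_{t|s}\alpha^{(k)}_{s|r}=\alpha^{(k)}_t/\alpha^{(k)}_r$. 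This is exactly $c_k(t|r)$, which proves Eq.~\ref{eq:Chapman-Komogorov_consistency}.

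\textbf{Step 3: remaining checks and the cumulative kernel.} The ordering $\alpha^{(1)}\le\cdots\le\alpha^{(n)}$ together with monotonic decrease should give $c_k(t|s)\ge 0$. Combined with $\sum_k c_k=1$ and row-stochasticity of each $Q^{(k)}$, this makes $Q_{t|s}$ a valid Markov kernel; I would only check this briefly. Setting $r=0$ and using $\alpha^{(i)}_0=1$ gives $Q_{t}=Q_{t|0}$ in the stated closed form.
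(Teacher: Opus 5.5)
Your argument is correct, and it organizes the algebra differently from the paper. The paper inducts on the number of levels. It splits off the top term $(1-\alpha^{(j+1)}_{t|s})Q^{(j+1)}$ and factors $\alpha^{(j+1)}_{t|s}$ out of the remaining block, so that block becomes a $j$-level kernel with schedules $\alpha^{(i)}/\alpha^{(j+1)}$. It then applies the inductive hypothesis to that block and collapses the cross terms with $Q^{(j+1)}$, using $Q^{(i)}Q^{(j+1)}=Q^{(j+1)}$ and the fact that the lower coefficients sum to $\alpha^{(j+1)}$. You instead expand the whole product at once and read off the coefficient of $Q^{(k)}$ as a difference of products of partial sums, $\alpha^{(k+1)}_{t|s}\alpha^{(k+1)}_{s|r}-\alpha^{(k)}_{t|s}\alpha^{(k)}_{s|r}$. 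Multiplicativity of each ratio then finishes in one line. Your version is shorter and avoids the rescaling bookkeeping, which the paper's display handles loosely. It also isolates exactly what is used: the $\max$ multiplication table and $\alpha^{(k)}_{t|s}\alpha^{(k)}_{s|r}=\alpha^{(k)}_{t|r}$. The induction mainly mirrors the level-by-level construction.

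Both proofs rest on the same table, which the paper asserts ``by the definition of $Q^{(i)}$ and $\varphi_i$.'' You are right that for $i<j$ it requires $\mathbf{\Phi}_j(\cdot\mid x)=\sum_{y\in\mathcal{S}_i}\mathbf{\Phi}_i(y\mid x)\,\mathbf{\Phi}_j(\cdot\mid y)$ for $x\in\mathcal{T}_{i-1}$. This hypothesis is genuinely necessary, not an artifact of your method. Take $\mathcal{S}_0=\{a\}$, $\mathcal{S}_1=\{g\}$, $\mathcal{S}_2=\{m_1,m_2\}$, $\mathbf{\Phi}_1(g\mid a)=1$ and $\mathbf{\Phi}_2(m_1\mid a)=\mathbf{\Phi}_2(m_2\mid g)=1$. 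Then the $(a,m_2)$ entry of $Q_{s|r}Q_{t|s}$ is $(\alpha^{(2)}_{s|r}-\alpha^{(1)}_{s|r})(1-\alpha^{(2)}_{t|s})$, which is generically nonzero, while that of $Q_{t|r}$ is $0$. For the two-level case of Lemma~\ref{lemma:HDDM_forward_closed_form}, with a single mask state, the condition holds as you say.

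Two smaller points. First, your bullet ``whenever the top level is a single absorbing state'' only secures compatibility for the pairs $(i,n)$. For $n\ge 3$ the intermediate pairs $i<j<n$ still need the condition.

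Second, in Step 3, $c_k(t|s)\ge 0$ does not follow from pointwise ordering plus monotonicity. The inequality $\alpha^{(k+1)}_{t|s}\ge\alpha^{(k)}_{t|s}$ is equivalent to $\alpha^{(k)}_t/\alpha^{(k+1)}_t$ being non-increasing in $t$. That holds for the paper's choice $1-t$, $1-t^2$, but it fails if, for example, $\alpha^{(1)}$ is flat on an interval where $\alpha^{(2)}$ strictly decreases. Then $c_1<0$ and $Q_{t|s}$ has negative entries $c_1\mathbf{\Phi}_1(y\mid x)$. Stochasticity is not part of the proposition, so either drop that remark or add the ratio condition as a hypothesis.

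Finally, all ratios $\alpha^{(k)}_{t|s}$ tacitly require $\alpha^{(k)}_s>0$, a point the paper also skips.
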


\begin{proof}

First, we can observe $Q^{(i)}$ is a projection operator; i.e, ${Q^{(i)}}^2=Q^{(i)}$ for all $i$ by definition. In fact, this can be generalized as $Q^{(i)}Q^{(j)} = Q^{max(i,j)}$ for any $1\leq i,j \leq n$ by the definition of the $Q^{(i)}$ and $\varphi_i$. Now, suppose Eq.~\ref{eq:general_forward_kernel_t_from_s} holds for some $j\in \mathbb{N}$. Then, one can observe:
\begin{equation}
\begin{aligned}
     & Q_{t|s}Q_{s|r}\\
     & =  \left(
     \alpha_{t|s}^{(1)}I + (\alpha_{t|s}^{(2)}-\alpha_{t|s}^{(1)})Q^{(0)}+\cdots+(1-\alpha_{t|s}^{(j+1)})Q^{(j+1)}\right) 
     \left(
     \alpha_{s|r}^{(1)}I + (\alpha_{s|r}^{(2)}-\alpha_{s|r}^{(1)})Q^{(0)}+\cdots+(1-\alpha_{s|r}^{(j+1)})Q^{(j+1)}\right) \\
          & =  \left(
     \alpha_{t|s}^{(1)}I + (\alpha_{t|s}^{(2)}-\alpha_{t|s}^{(1)})Q^{(0)}+\cdots+(\alpha_{t|s}^{(j+1)}-\alpha_{t|s}^{(j)})Q^{(j)}\right) 
     \left(
     \alpha_{s|r}^{(1)}I + (\alpha_{s|r}^{(2)}-\alpha_{s|r}^{(1)})Q^{(0)}+\cdots+(\alpha_{s|r}^{(j+1)}-\alpha_{s|r}^{(j)})Q^{(j)}\right) \\
     &   + \left(\alpha_{t|s}^{(1)}+\cdots+(1-\alpha_{t|s}^{(j+1)})\right)\left(1-\alpha_{s|r}^{(j+1)}\right)Q^{(j+1)} + \left(1-\alpha_{t|s}^{(j+1)}\right)\left(\alpha_{s|r}+\cdots (1-\alpha_{s|r}^{(j+1)})\right)Q^{(j+1)} 
     \\ 
      & + \left(1-\alpha_{t|s}^{(j+1)})(1-\alpha_{s|r}^{(j+1)})\right)(Q^{(j+1)})^2 \\
      & =  \alpha_{t|s}^{(j+1)}\left(
     \frac{\alpha_{t|s}^{(1)}}{ \alpha_{t|s}^{(j+1)}}I +\cdots+ 
     \frac{(\alpha_{t|s}^{(j+1)}-\alpha_{t|s}^{(j)})}{\alpha_{t|s}^{(j+1)}}Q^{(j)}\right)\cdot \alpha_{s|r}^{(j+1)}
     \left(
     \frac{\alpha_{s|r}^{(1)}}{\alpha_{s|r}^{(j+1)}}I
     + \cdots+\frac{(\alpha_{s|r}^{(j+1)}-\alpha_{s|r}^{(j)})}{\alpha_{s|r}^{(j+1)}}Q^{(j)}\right) \\
     &  + \left(\alpha_{t|s}^{(1)}+\cdots+(1-\alpha_{t|s}^{(j+1)})\right)\left(1-\alpha_{s|r}^{(j+1)}\right)Q^{(j+1)} + \left(1-\alpha_{t|s}^{(j+1)}\right)\left(\alpha_{s|r}+\cdots (1-\alpha_{s|r}^{(j+1)})\right)Q^{(j+1)} 
     \\ 
      & + \left(1-\alpha_{t|s}^{(j+1)})(1-\alpha_{s|r}^{(j+1)})\right)(Q^{(j+1)})^2 \\
     & =  \left(
     \alpha_{t|r}^{(1)}I + (\alpha_{t|r}^{(2)}-\alpha_{t|r}^{(1)})Q^{(0)}+\cdots+(1-\alpha_{t|r}^{(j)})Q^{(j)}\right) + \left(\alpha_{t|s}^{(1)}+\cdots+(1-\alpha_{t|s}^{(j+1)})\right)\left(1-\alpha_{s|r}^{(j+1)}\right)Q^{(j+1)}  \\
     & + \left(1-\alpha_{t|s}^{(j+1)}\right)\left(\alpha_{s|r}+\cdots (1-\alpha_{s|r}^{(j+1)})\right)Q^{(j+1)}  + \left(1-\alpha_{t|s}^{(j+1)})(1-\alpha_{s|r}^{(j+1)})\right)(Q^{(j+1)})^2 \\
     & = \alpha_{t|r}^{(1)}I + (\alpha_{t|r}^{(2)} - \alpha_{t|r}^{(1)}) + \cdots + (1-\alpha_{t|r}^{(j+1)})Q^{(j+1)},
\end{aligned}
\end{equation}
where the second to the last equation comes from the inductive assumption on $j$. Since $j=1$ case is trivial, the result follows by mathematical induction on $j$.
\end{proof}

\paragraph{Proof of Lemma~\ref{lemma:HDDM_forward_closed_form}}
Lemma~\ref{lemma:HDDM_forward_closed_form} is now just a special case of above generalized formula in Proposition~\ref{proposition:generalized_forward_process}.

\subsection{Proof of Theorem~\ref{theorem: continuous time NELBO}}
\label{app:proof_NELBO}

Let $\mathbf{m}$ be the one-hot representation of the masked state which we take as a prior, and let $\mathbf{x} \in S_0$ denote an element in a clean state. For a forward transition kernel $Q_{t|s}$ defined induced by the row stochastic matrix $\mathbf{\Phi} \in [0, 1]^{K \times G}$ and the masking operation as in  Lemma~\ref{lemma:HDDM_forward_closed_form}, the conditional transition is defined as $q(\mathbf{z}_t|\mathbf{z}_s) = \mathrm{Cat}(\mathbf{z}_t; \mathbf{z}_s Q_{t|s})$. By applying the closed-form transition from Lemma~\ref{lemma:HDDM_forward_closed_form} and Bayes' rule, the posterior distribution $q(\mathbf{z}_s | \mathbf{z}_t, \mathbf{x})$ can be derived as follows:

\begin{equation}
\begin{aligned}
    & q(\mathbf{z}_s | \mathbf{z}_t,\mathbf{x}) = \mathrm{Cat}\left(\mathbf{z}_s ; \frac{\left[\alpha_{t|s}\mathrm{I} + (\beta_{t|s}-\alpha_{t|s})Q^{(1)}+(1-\beta_{t|s})Q^{(2)}\right]^{\intercal}\mathbf{z}_t\;\odot\;\left[\alpha_s\mathbf{x}+(\beta_s - \alpha_s)Q^{(1)}\mathbf{x}+(1-\beta_s)\mathbf{m}\right]}{\mathbf{z}_t^T[\alpha_{t}\mathrm{I}+(\beta_t-\alpha_t)Q^{(1)}+(1-\beta_t)Q^{(2)}]^T\mathbf{x}}\right)
\end{aligned}
\end{equation}

We can divide into the following 3 cases depending on which state $\mathbf{z}_t$ belongs to.
\paragraph{Case 1. $\mathbf{z}_t\in S_0$}
\begin{equation}
    q(\mathbf{z}_s | \mathbf{z}_t,\mathbf{x}) = \mathrm{Cat}(\mathbf{z}_s ; \mathbf{x}).
\end{equation}

\paragraph{Case 2. $\mathbf{z}_t\in S_1$}
When the observed state at time $t$ belongs to the mid-level space $S_1$, the posterior depends on the state of $\mathbf{z}_t$ under the stochastic transition. Using the block structure of the transition kernels, we can obtain:
\begin{equation}
\begin{aligned}
    q(\mathbf{z}_s | \mathbf{z}_t, \mathbf{x})
    &= \mathrm{Cat}\left(\mathbf{z}_s ; \frac{(\alpha_s \beta_{t|s} - \alpha_t) \mathbf{x} + (\beta_t - \beta_{t|s} \alpha_s) \mathbf{z}_t}{\beta_t - \alpha_t}\right),
\end{aligned}
\end{equation}

\paragraph{Case 3. $\mathbf{z}_t\in S_2=\{\mathbf{m}\}$}

When the observed state is the mask token $\mathbf{m} \in S_2$, the posterior distribution $q(\mathbf{z}_s | \mathbf{z}_t, \mathbf{x})$ becomes a weighted combination of the clean state, its stochastic projection, and the mask prior. Using the normalization constant $1 - \beta_t$, the posterior is given by:
\begin{equation}
    q(\mathbf{z}_s | \mathbf{z}_t, \mathbf{x}) = \mathrm{Cat}\left(\mathbf{z}_s ; \frac{\alpha_s(1 - \beta_{t|s})\mathbf{x} + (1 - \beta_{t|s})(\beta_s - \alpha_s)Q^{(1)}\mathbf{x} + (1 - \beta_s)\mathbf{m}}{1 - \beta_t}\right).
\end{equation}

\paragraph{Parameterization}

Inspired by the masked diffusion literature~\citep{sahoo2024simple, shi2024simplified, ou2024your}, we derive simplified loss form through the parameterizing a neural network $\theta$ to estimate only the probability in clean final state in $\mathcal{S}_0$. This leads to the posterior $p_{\theta}(\mathbf{z}_s|\mathbf{z}_t)$ in following closed forms depending on the current state $\mathbf{z}_t$.

\paragraph{Case 1. $\mathbf{z}_t\in S_0$}
\begin{equation}
    p_{\theta}(\mathbf{z}_s|\mathbf{z}_t) = \mathrm{Cat}(\mathbf{z_s;\mathbf{z}_t}).
\end{equation}

\paragraph{Case 2. $\mathbf{z}_t\in S_1$}
\begin{equation}
    p_{\theta}(\mathbf{z}_s|\mathbf{z}_t) = \mathrm{Cat}\left(\mathbf{z_s};\frac{(\alpha_s\beta_{t|s}-\alpha_t){Q^{(1)}}^{\intercal}\mathbf{z}_t\odot\mathbf{x}_{\theta}(\mathbf{z}_t,t)+(\beta_t-\beta_{t|s}\alpha_s)\mathbf{z}_t\odot Q^{(1)}\mathbf{x}_{\theta}}{(\beta_t-\alpha_t)\mathbf{z}_t^TQ^{(1)}\mathbf{x}_{\theta}}\right).
\end{equation}

\paragraph{Case 3. $\mathbf{z}_t\in S_2=\{\mathbf{m}\}$}
\begin{equation}
\begin{aligned}
    & p_{\theta}(\mathbf{z}_s|\mathbf{z}_t)  \\
    & =  \mathrm{Cat}\left(\mathbf{z_s};\frac{[\alpha_{t|s}\mathbf{m}+(\beta_{t|s}-\alpha_{t|s}){Q^{(1)}}^{\intercal}\mathbf{m}+(1-\beta_{t|s}){Q^{(2)}}^{\intercal}\mathbf{\mathbf{m}}]\odot[\alpha_s\mathbf{x}_{\theta}+(\beta_s-\alpha_s)Q^{(1)}\mathbf{x}_{\theta}+(1-\beta_s)\mathbf{m}]}{1-\beta_t}\right). \\
    & =  \mathrm{Cat}\left(\mathbf{z_s};\frac{\alpha_s(1-\beta_{t|s})\mathbf{x}_{\theta}+(1-\beta_{t|s})(\beta_s-\alpha_s)Q^{(1)}\mathbf{x}_{\theta}+(1-\beta_s)\mathbf{m}}{1-\beta_t}\right). 
\end{aligned}
\end{equation}

\paragraph{ELBO analysis}
Now, we start with analyzing the ELBO of the Hierarchical Discrete diffusion models in discrete timesteps.

\begin{equation}
    \label{eq:ELBO_discrete}
    L_T = \mathbb{E}_{t \in \{\frac{1}{T}, \frac{2}{T}, \dots, 1\}} \mathbb{E}_{q(\mathbf{z}_t|\mathbf{x})} \left[ T \cdot \mathrm{D_{KL}}\left( q(\mathbf{z}_s|\mathbf{z}_t, \mathbf{x}) \,\|\, p_\theta(\mathbf{z}_s|\mathbf{z}_t) \right) \right]
\end{equation}

\paragraph{Case 1. $\mathbf{z}_t\in S_0$}
\begin{equation}
    \mathrm{D_{KL}}\left( q(\mathbf{z}_s|\mathbf{z}_t, \mathbf{x}) \,\|\, p_\theta(\mathbf{z}_s|\mathbf{z}_t) \right) = 0.
\end{equation}

\paragraph{Case 2. $\mathbf{z}_t\in S_1$}
\begin{equation}
\begin{aligned}
    & \mathrm{D_{KL}}\left( q(\mathbf{z}_s|\mathbf{z}_t, \mathbf{x}) \,\|\, p_\theta(\mathbf{z}_s|\mathbf{z}_t) \right) \\
    = & \sum_{\mathbf{z}_s\in\{{\mathbf{x},\varphi(\mathbf{x})}\}}  q(\mathbf{z}_s|\mathbf{z}_t, \mathbf{x})\log\frac{q(\mathbf{z}_s|\mathbf{z}_t, \mathbf{x})}{p_{\theta}(\mathbf{z}_s|\mathbf{z}_t)} \\
    = & \; q(\mathbf{z}_s=\mathbf{x}|\mathbf{z}_t, \mathbf{x})\log\frac{q(\mathbf{z}_s=\mathbf{x}|\mathbf{z}_t, \mathbf{x})}{p_{\theta}(\mathbf{z}_s=\mathbf{x}|\mathbf{z}_t)} + q(\mathbf{z}_s=\varphi(\mathbf{x})|\mathbf{z}_t, \mathbf{x})\log\frac{q(\mathbf{z}_s=Q^{(1)}\mathbf{x}|\mathbf{z}_t, \mathbf{x})}{p_{\theta}(\mathbf{z}_s=Q^{(1)}\mathbf{x}|\mathbf{z}_t)} \\
    = & \; \frac{(\alpha_s\beta_{t|s}-\alpha_t)}{\beta_t-\alpha_t}\log\frac{\mathbf{z}_t^{\intercal}Q^{(1)}\mathbf{x}_{\theta}}{\langle\mathbf{x},{Q^{(1)}}^{\intercal}\mathbf{z}_t\rangle\cdot\langle\mathbf{x}_{\theta},\mathbf{x}\rangle} + 0.
\end{aligned}
\end{equation}

\paragraph{Case 3. $\mathbf{z}_t\in S_2=\{\mathbf{m}\}$}
\begin{equation}
\begin{aligned}
    & \mathrm{D_{KL}}\left( q(\mathbf{z}_s|\mathbf{z}_t, \mathbf{x}) \,\|\, p_\theta(\mathbf{z}_s|\mathbf{z}_t) \right) \\
    = & \sum_{\mathbf{z}_s\in\{{\mathbf{x},\varphi(\mathbf{x}),\mathbf{m}}\}}  q(\mathbf{z}_s|\mathbf{z}_t, \mathbf{x})\log\frac{q(\mathbf{z}_s|\mathbf{z}_t, \mathbf{x})}{p_{\theta}(\mathbf{z}_s|\mathbf{z}_t)} \\
    = & \; q(\mathbf{z}_s=\mathbf{x}|\mathbf{z}_t=\mathbf{m}, \mathbf{x})\log\frac{q(\mathbf{z}_s=\mathbf{x}|\mathbf{z}_t=\mathbf{m}, \mathbf{x})}{p_{\theta}(\mathbf{z}_s|\mathbf{z}_t)} + q(\mathbf{z}_s=Q^{(1)}\mathbf{x}|\mathbf{z}_t=\mathbf{m}, \mathbf{x})\log\frac{q(\mathbf{z}_s=Q^{(1)}\mathbf{x}|\mathbf{z}_t=\mathbf{m}, \mathbf{x})}{p_{\theta}(\mathbf{z}_s|\mathbf{z}_t)} \\
    + & \; q(\mathbf{z}_s=\mathbf{m}|\mathbf{z}_t=\mathbf{m}, \mathbf{x})\log\frac{q(\mathbf{z}_s=\mathbf{m}|\mathbf{z}_t=\mathbf{m}, \mathbf{x})}{p_{\theta}(\mathbf{z}_s|\mathbf{z}_t)} \\
    = & \frac{(\alpha_s-\alpha_s\beta_{t|s})}{1-\beta_t}\log\frac{1}{\langle\mathbf{x}_{\theta},\mathbf{x}\rangle} + \frac{(1-\beta_{t|s})(\beta_s-\alpha_s)}{1-\beta_t}\KL\left(Q^{(1)}\mathbf{x}\|Q^{(1)}\mathbf{x}_{\theta}(\mathbf{z}_t,t)\right) + 0. 
\end{aligned}
\end{equation}

Combined together, each term in Eq.~\ref{eq:ELBO_discrete} can be expressed as follows:

\begin{equation}
\begin{aligned}
    & \mathrm{D_{KL}}\left( q(\mathbf{z}_s|\mathbf{z}_t, \mathbf{x}) \,\|\, p_\theta(\mathbf{z}_s|\mathbf{z}_t) \right)  \\
    & =
    \frac{(\alpha_s\beta_{t|s}-\alpha_t)}{\beta_t-\alpha_t}\left(\log\langle{\mathbf{z}_t,Q^{(1)}\mathbf{x}_{\theta}}\rangle-\log{\langle\mathbf{x}_{\theta},\mathbf{x}\rangle}\right)\langle\mathbf{z}_t,Q^{(1)}\mathbf{x}\rangle \\
    & - \left( \frac{(1-\beta_{t|s})(\beta_s-\alpha_s)}{1-\beta_t}\log\langle Q^{(1)}\mathbf{x},Q^{(1)}\mathbf{x}_{\theta}(\mathbf{z}_t,t) \rangle +\frac{(\alpha_s-\alpha_s\beta_{t|s})}{1-\beta_t}\log{\langle\mathbf{x}_{\theta},\mathbf{x}\rangle}\right)\langle\mathbf{z}_t,\mathbf{m}\rangle + C_{s,t},
\end{aligned}
\end{equation}
for some constant $C_{s,t}$, which leads into following continuous time NELBO of HDDM:

\begin{equation}
\begin{aligned}
    & \mathcal{L}_{\text{NELBO}}^{\infty}(\theta)  
    \\ & = 
    \lim_{T\rightarrow \infty} \sum_{i=2}^{T}\mathrm{D_{KL}}\left( q(\mathbf{z}_{t(i-1)}|\mathbf{z}_{t(i)}, \mathbf{x}) \,\|\, p_\theta(\mathbf{z}_{t(i-1)}|\mathbf{z}_{t(i)}) \right) 
    \\
    & =  \lim_{T\rightarrow \infty} \sum_{i=2}^{T}
    \frac{(\alpha_{t(i)}\beta_{t(i)|t(i-1)}-\alpha_{t(i)})}{\beta_{t(i)}-\alpha_{t(i)}}\left(\log\langle{\mathbf{z}_{t(i)},Q^{(1)}\mathbf{x}_{\theta}}\rangle-\log{\langle\mathbf{x}_{\theta},\mathbf{x}\rangle}\right)\langle\mathbf{z}_{t(i)},Q^{(1)}\mathbf{x}\rangle \\
    & - \left( \frac{(1-\beta_{t(i)|t(i-1)})(\beta_{t(i)}-\alpha_{t(i-1)})}{1-\beta_{t(i)}}\log\langle Q^{(1)}\mathbf{x},Q^{(1)}\mathbf{x}_{\theta}(\mathbf{z}_{t(i)},t(i)) \rangle +\frac{(\alpha_{t(i-1)}-\alpha_{t(i-1)}\beta_{t(i)|t(i-1)})}{1-\beta_{t(i)}}\log{\langle\mathbf{x}_{\theta},\mathbf{x}\rangle}\right)\langle\mathbf{z}_{t(i)},\mathbf{m}\rangle 
    \\ & + C_{t(i-1),t(i)}
    \\ & = 
    \int_{t(1)}^{1} \left[\frac{\alpha_t(\beta_{t}'-\alpha_t')}{\beta_t-\alpha_t}\left(\log\langle{\mathbf{z}_t,Q^{(1)}\mathbf{x}_{\theta}}\rangle-\log{\langle\mathbf{x}_{\theta},\mathbf{x}\rangle}\right)\langle\mathbf{z}_t,Q^{(1)}\mathbf{x}\rangle \right. \\
    & \left. - \left( \frac{\beta_t'(\beta_t - \alpha_t)}{\beta_t(1-\beta_t)}\log\langle Q^{(1)}\mathbf{x},Q^{(1)}\mathbf{x}_{\theta}(\mathbf{z}_t,t) \rangle +\frac{\alpha_t\beta_t'}{\beta_t(1-\beta_t)}\log{\langle\mathbf{x}_{\theta},\mathbf{x}\rangle}\right)\langle\mathbf{z}_t,\mathbf{m}\rangle + C_t\right] dt.
\end{aligned}
\end{equation}

As a result, we obtain the general NELBO of HDDM in the following Theorem:

\begin{tcolorbox}[
    colback=gray!15,             
    colframe=black!60,           
    title=General NELBO in HDDM  
]
\begin{theorem}
\label{theorem:general_HDDM_NELBO}
Suppose forward process of two-level HDDM $Q$ is defined with stochastic operator $Q^{(1)}, Q^{(2)}$, which are induced from the $\mathbf{\Phi}$ and masking operator, respectively as in Lemma~\ref{lemma:HDDM_forward_closed_form}. Then, for some constant $C_1$, the negative evidence lower bound (NELBO) can be expressed as follows:
\begin{equation}
\begin{aligned}
& \mathcal{L}_{\text{NELBO}}^{\infty}(\theta)  = \mathbb{E}_{Q,t} \frac{\alpha_t(\beta_{t}'-\alpha_t')}{\beta_t-\alpha_t}\left(\log\langle{\mathbf{z}_t,Q^{(1)}\mathbf{x}_{\theta}}\rangle-\log{\langle\mathbf{x}_{\theta},\mathbf{x}\rangle}\right)\langle\mathbf{z}_t,Q^{(1)}\mathbf{x}\rangle \\
    & - \left( \frac{\beta_t'(\beta_t - \alpha_t)}{\beta_t(1-\beta_t)}\log\langle Q^{(1)}\mathbf{x},Q^{(1)}\mathbf{x}_{\theta}(\mathbf{z}_t,t) \rangle +\frac{\alpha_t\beta_t'}{\beta_t(1-\beta_t)}\log{\langle\mathbf{x}_{\theta},\mathbf{x}\rangle}\right)\langle\mathbf{z}_t,\mathbf{m}\rangle  + C_1.
\end{aligned}
\end{equation}
\end{theorem}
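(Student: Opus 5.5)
The plan is to start from the discrete-time NELBO in Eq.~\ref{eq:ELBO_discrete}, $L_T=\mathbb{E}_t\mathbb{E}_{q(\mathbf{z}_t|\mathbf{x})}\big[T\cdot \KL(q(\mathbf{z}_s|\mathbf{z}_t,\mathbf{x})\,\|\,p_\theta(\mathbf{z}_s|\mathbf{z}_t))\big]$ with $s=t-1/T$, plus a reconstruction term and a prior term. The prior term vanishes because $\alpha_1=\beta_1=0$ forces $q(\mathbf{z}_1|\mathbf{x})=\mathbf{m}$, so it is absorbed into $C_1$ together with all other $\theta$-independent pieces. By Lemma~\ref{lemma:HDDM_forward_closed_form}, the one-step kernel $Q_{t|s}$ and the marginal $Q_t$ are explicit, and Bayes' rule gives $q(\mathbf{z}_s|\mathbf{z}_t,\mathbf{x})$ in closed form. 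I would split the analysis according to whether $\mathbf{z}_t$ lies in $\mathcal{S}_0$, $\mathcal{S}_1$ or $\mathcal{S}_2$, exactly as in the posterior computation already displayed.

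Next I would plug in the parameterization, in which $\mathbf{x}_\theta$ predicts only the clean distribution and $p_\theta$ is the true posterior with $\mathbf{x}$ replaced by $\mathbf{x}_\theta$. Then I would evaluate the KL in each case.
\begin{itemize}
\item For $\mathbf{z}_t\in\mathcal{S}_0$ the two posteriors coincide, since both are $\delta_{\mathbf{z}_t}$, and the KL is $0$.
\item For $\mathbf{z}_t\in\mathcal{S}_1$ the support is $\{\mathbf{x},\mathbf{z}_t\}$. The coefficient on $\mathbf{z}_t$ agrees between $q$ and $p_\theta$ after normalisation by $\mathbf{z}_t^\top Q^{(1)}\mathbf{x}_\theta$. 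The KL therefore reduces to $\frac{\alpha_s\beta_{t|s}-\alpha_t}{\beta_t-\alpha_t}\big(\log\langle\mathbf{z}_t,Q^{(1)}\mathbf{x}_\theta\rangle-\log\langle\mathbf{x}_\theta,\mathbf{x}\rangle\big)$ plus terms that do not depend on $\theta$. This contribution is active only when $\mathbf{z}_t=Q^{(1)}\mathbf{x}$, i.e.\ it carries the factor $\langle\mathbf{z}_t,Q^{(1)}\mathbf{x}\rangle$.
\item For $\mathbf{z}_t=\mathbf{m}$ the mask coefficient $(1-\beta_s)/(1-\beta_t)$ is identical in $q$ and $p_\theta$. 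What remains is a cross-entropy on $\mathbf{x}$ with weight $\alpha_s(1-\beta_{t|s})/(1-\beta_t)$, plus a cross-entropy between $Q^{(1)}\mathbf{x}$ and $Q^{(1)}\mathbf{x}_\theta$ with weight $(1-\beta_{t|s})(\beta_s-\alpha_s)/(1-\beta_t)$.
\end{itemize}
Collecting the three cases with indicator factors gives the combined per-step expression displayed before the theorem.

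Finally I would take $T\to\infty$. Write $s=t-\Delta$ and expand $\beta_{t|s}=\beta_t/\beta_s=1-\Delta\,\beta'_t/\beta_t+o(\Delta)$, and similarly for $\alpha$. Then
\[
\alpha_s\beta_{t|s}-\alpha_t=\frac{\alpha_t(\beta_t'-\alpha_t')}{\beta_t}\,\Delta+o(\Delta)
\qquad\text{and}\qquad
1-\beta_{t|s}=-\frac{\beta_t'}{\beta_t}\,\Delta+o(\Delta).
\]
Multiplying by $T=1/\Delta$ turns the Riemann sum into the integral, i.e.\ an expectation over $t\sim U[0,1]$. This produces the stated weights $\frac{\alpha_t(\beta_t'-\alpha_t')}{\beta_t-\alpha_t}$ (up to the $\beta_t$ normalisation), $\frac{\beta_t'(\beta_t-\alpha_t)}{\beta_t(1-\beta_t)}$ and $\frac{\alpha_t\beta_t'}{\beta_t(1-\beta_t)}$, with signs fixed by $\beta'_t\le 0$.

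I expect the main obstacle to be the $\mathcal{S}_1$ case for a general stochastic $\mathbf{\Phi}$. There the posterior support is not just two points, and the cancellation of the $\mathbf{z}_t$-coefficient needs care with the normalising constant $\mathbf{z}_t^\top Q^{(1)}\mathbf{x}_\theta$ versus $\mathbf{z}_t^\top Q^{(1)}\mathbf{x}$. I would first verify the computation treating $Q^{(1)}\mathbf{x}$ as the row $\mathbf{\Phi}_{\mathbf{x},\cdot}$. I would then check that any residual mismatch depends only on $\mathbf{x},\mathbf{z}_t$ and not on $\theta$, so that it can be absorbed into $C_1$. A secondary technical point is justifying the interchange of limit and expectation, which follows from the boundedness of the schedule derivatives away from the endpoints. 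The endpoint singularities at $t\to 0,1$ are handled by integrating from $t(1)$ and letting it tend to $0$, with convergence under the mild assumption that $\log\langle\mathbf{x}_\theta,\mathbf{x}\rangle$ is bounded.
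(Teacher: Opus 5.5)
Your route is the same as the paper's appendix proof: closed-form posterior in the three cases $\mathbf{z}_t\in\mathcal{S}_0,\mathcal{S}_1,\{\mathbf{m}\}$, a per-case KL, then a Riemann-sum limit. Your per-case discrete-time KLs are correct, even for stochastic $\mathbf{\Phi}$. On $\{\mathbf{z}_t=g\in\mathcal{S}_1\}$ the factor $\mathbf{\Phi}_{\mathbf{x},g}$ cancels, $q$ is supported on $\{\mathbf{x},g\}$, and only a $\theta$-free $-\log\mathbf{\Phi}_{\mathbf{x},g}$ is left over, so the obstacle you anticipate there is not real.

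\textbf{The limit does not give the displayed weights.} Your expansion is wrong. In fact $\beta_{t|s}=1+\Delta\beta_t'/\beta_t+o(\Delta)$ (your second line agrees with this, your first does not), and
\[
\alpha_s\beta_{t|s}-\alpha_t=\frac{\alpha_s\beta_t-\alpha_t\beta_s}{\beta_s}=\frac{\alpha_t\beta_t'-\alpha_t'\beta_t}{\beta_t}\,\Delta+o(\Delta),
\]
not $\frac{\alpha_t(\beta_t'-\alpha_t')}{\beta_t}\Delta$; for $\beta\equiv 1$ the left side is $\alpha_s-\alpha_t\approx-\alpha_t'\Delta$. So the $\mathcal{S}_1$ weight is $\frac{\alpha_t\beta_t'-\alpha_t'\beta_t}{\beta_t(\beta_t-\alpha_t)}$. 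This is $\ge 0$, since $\alpha_t/\beta_t$ must be non-increasing for $Q_{t|s}$ to be stochastic. It differs from the displayed $\frac{\alpha_t(\beta_t'-\alpha_t')}{\beta_t-\alpha_t}$ by more than a $\beta_t$ factor, and even your own expansion leaves a stray $1/\beta_t$ that ``up to the $\beta_t$ normalisation'' quietly drops. With the paper's $\alpha_t=1-t$, $\beta_t=1-t^2$, the true weight is $1/(t(1+t))$, while the displayed one is negative for $t>1/2$. That is impossible for a limit of KL terms.

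The mask terms have a sign problem too. Pushing your correct $1-\beta_{t|s}=-\frac{\beta_t'}{\beta_t}\Delta$ through the mask case gives $+\frac{\beta_t'(\beta_t-\alpha_t)}{\beta_t(1-\beta_t)}\log\langle Q^{(1)}\mathbf{x},Q^{(1)}\mathbf{x}_\theta\rangle+\frac{\alpha_t\beta_t'}{\beta_t(1-\beta_t)}\log\langle\mathbf{x}_\theta,\mathbf{x}\rangle\ge 0$. That is the opposite sign to the display's $-(\cdots)\langle\mathbf{z}_t,\mathbf{m}\rangle$. At $\beta_t=\alpha_t$ the display equals minus the masked-diffusion NELBO. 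Appealing to ``signs fixed by $\beta_t'\le 0$'' does not resolve this.

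\textbf{The displayed form needs one-hot rows of $\mathbf{\Phi}$.} The $\mathcal{S}_1$ term must carry $\mathbb{I}[\mathbf{z}_t\in\mathcal{S}_1]$. But $\langle\mathbf{z}_t,Q^{(1)}\mathbf{x}\rangle=\mathbf{\Phi}_{\mathbf{x},\mathbf{z}_t}$ is not an indicator in general; your phrase ``active only when $\mathbf{z}_t=Q^{(1)}\mathbf{x}$'' already assumes a deterministic map. The mask-case cross-entropy is $-\sum_g\mathbf{\Phi}_{\mathbf{x},g}\log(Q^{(1)}\mathbf{x}_\theta)_g$. By Jensen this is $\ge-\log\langle Q^{(1)}\mathbf{x},Q^{(1)}\mathbf{x}_\theta\rangle$, with strict inequality for generic $\mathbf{x}_\theta$ when the row is not one-hot. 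Both discrepancies depend on $\theta$, so your planned check that residual mismatches can be absorbed into $C_1$ would fail.

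What your argument actually proves is
\[
\begin{aligned}
&\mathbb{E}_{t,\mathbf{z}_t}\Big[\tfrac{\alpha_t\beta_t'-\alpha_t'\beta_t}{\beta_t(\beta_t-\alpha_t)}\big(\log\langle\mathbf{z}_t,Q^{(1)}\mathbf{x}_\theta\rangle-\log\langle\mathbf{x}_\theta,\mathbf{x}\rangle\big)\,\mathbb{I}[\mathbf{z}_t\in\mathcal{S}_1]\\
&\quad+\tfrac{\beta_t'}{\beta_t(1-\beta_t)}\big(\alpha_t\log\langle\mathbf{x}_\theta,\mathbf{x}\rangle+(\beta_t-\alpha_t){\textstyle\sum_g}\mathbf{\Phi}_{\mathbf{x},g}\log(Q^{(1)}\mathbf{x}_\theta)_g\big)\,\mathbb{I}[\mathbf{z}_t=\mathbf{m}]\Big]+C.
\end{aligned}
\]
This reduces to the displayed $\log\langle\cdot,\cdot\rangle$ form only for deterministic $\mathbf{\Phi}$, and even then with a different $\mathcal{S}_1$ weight and the opposite sign on the mask terms.

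The paper's appendix proof reaches the display through the same limit and notation slips; for example, it writes $\varphi(\mathbf{x})$ in the supposedly general case. So neither argument establishes the statement as written.
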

\end{tcolorbox}

Note that above theorem holds for arbitrary stochastic row matrix $\mathbf{\Phi}$, which means we can design any stochastic mapping from $\mathcal{S}_0$ to $\mathcal{S}_1$. 

When $Q^{(1)}$ is deterministic (i.e, its rows are composed of one-hot vectors),  we can parameterize model to estimate the categories that are in the same mid-level state. This parameterization leads to further simplified form of Theorem~\ref{theorem:general_HDDM_NELBO} as follows:

\begin{tcolorbox}[
    colback=gray!15,             
    colframe=black!60,           
    title=NELBO in HDDM with deterministic grouping 
]
\begin{corollary}
\label{theorem:HDDM_NELBO_for_deterministic_grouping}
\begin{equation}
\begin{aligned}
& \mathcal{L}_{\text{NELBO}}^{\infty}(\theta)  = \mathbb{E}_{Q,t}\frac{\alpha_t(\beta_{t}'-\alpha_t')}{\beta_t-\alpha_t}\log{\langle\mathbf{x}_{\theta}(\mathbf{z}_t,t),\mathbf{x}\rangle}\cdot\mathbb{I}\left[\mathbf{z}_t\in \mathcal{S}_1\right] \\ & 
- \frac{\beta_{t}'}{\beta_t(1-\beta_t)}\left((\beta_t-\alpha_t)\log\langle Q^{(1)}\mathbf{x},Q^{(1)}\mathbf{x}_{\theta}(\mathbf{z}_t,t) \rangle +\alpha_t\log{\langle\mathbf{x}_{\theta}(\mathbf{z}_t,t),\mathbf{x}\rangle}\right)\mathbb{I}\left[\mathbf{z}_t=\mathbf{m}\right] + C_2,
\end{aligned}
\end{equation}
for some constant $C_2$.
\end{corollary}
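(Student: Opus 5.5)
The plan is to specialize Theorem~\ref{theorem:general_HDDM_NELBO} to the case where every row of $\mathbf{\Phi}$ is one-hot, i.e. $Q^{(1)}\mathbf{x}=\mathbf{e}_{\varphi(\mathbf{x})}$ for a deterministic grouping map $\varphi:\mathcal{S}_0\to\mathcal{S}_1$. I would then simplify the $\mathcal{S}_1$-branch of the integrand, leaving the mask branch essentially untouched.

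\textbf{Step 1: indicator factors.} Under the deterministic grouping, the forward marginal $Q_t$ started from $\mathbf{x}$ is supported on only three states: $\{\mathbf{x},\varphi(\mathbf{x}),\mathbf{m}\}$. Hence, whenever $\mathbf{z}_t\in\mathcal{S}_1$ has positive probability, it must equal $\varphi(\mathbf{x})$. This gives $\langle\mathbf{z}_t,Q^{(1)}\mathbf{x}\rangle=\mathbb{I}[\mathbf{z}_t\in\mathcal{S}_1]$ almost surely under $Q$. Likewise, $\langle\mathbf{z}_t,\mathbf{m}\rangle=\mathbb{I}[\mathbf{z}_t=\mathbf{m}]$ holds trivially. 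Substituting these two identities into Theorem~\ref{theorem:general_HDDM_NELBO} already produces the mask-branch term of the corollary. Factoring $\beta_t'/(\beta_t(1-\beta_t))$ out of the two logarithms there is purely cosmetic.

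\textbf{Step 2: parameterization restricted to the observed group.} This is the key step. The general $\mathcal{S}_1$-branch contains $\log\langle\mathbf{z}_t,Q^{(1)}\mathbf{x}_\theta\rangle$, which is the total mass the denoiser places on the group $\varphi^{-1}(\mathbf{z}_t)$. Following the remark preceding the corollary, I would parameterize $\mathbf{x}_\theta(\mathbf{z}_t,t)$ so that, when $\mathbf{z}_t\in\mathcal{S}_1$, its support lies inside $\varphi^{-1}(\mathbf{z}_t)$. Concretely, I would zero out the logits outside the group, in the spirit of the carry-over unmasking used in masked diffusion. With this choice $\langle\mathbf{z}_t,Q^{(1)}\mathbf{x}_\theta\rangle=1$, so its logarithm vanishes, and only the $\log\langle\mathbf{x}_\theta,\mathbf{x}\rangle$ term survives with the stated weight. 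The sign convention must be carried consistently from Theorem~\ref{theorem:general_HDDM_NELBO}.

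An alternative is to avoid restricting the parameterization and instead argue that the group-mass term is independent of $\mathbf{x}$ given $\mathbf{z}_t$. However, that term does depend on $\theta$, so I expect the restriction argument to be the one that actually works. Any purely $t$-dependent leftovers, coming from $C_1$ and from the schedule integrals, are absorbed into $C_2$.

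\textbf{Main obstacle.} The part needing the most care is justifying Step 2, namely that restricting $\mathbf{x}_\theta$ on $\mathcal{S}_1$ inputs does not invalidate the bound. The KL terms in the Case~2 computation remain finite because $q(\mathbf{z}_s\mid\mathbf{z}_t,\mathbf{x})$ is itself supported on $\{\mathbf{x},\varphi(\mathbf{x})\}$ with $\mathbf{x}\in\varphi^{-1}(\mathbf{z}_t)$. I would therefore recheck that computation under the restricted parameterization, confirming that the denominators $\mathbf{z}_t^\top Q^{(1)}\mathbf{x}_\theta$ equal one. The continuous-time limit is then the same as in the theorem.
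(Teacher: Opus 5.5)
Your route is the paper's own. Its entire argument for this corollary is the remark that a deterministic $\mathbf{\Phi}$ lets one parameterize $\mathbf{x}_\theta$ inside the observed group and then read the result off Theorem~\ref{theorem:general_HDDM_NELBO}. Step~1 is fine. The failure is in the step you wave through (``survives with the stated weight'', ``the sign convention must be carried consistently'').

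Theorem~\ref{theorem:general_HDDM_NELBO} writes the $\mathcal{S}_1$ contribution as $w_t\bigl(\log\langle\mathbf{z}_t,Q^{(1)}\mathbf{x}_\theta\rangle-\log\langle\mathbf{x}_\theta,\mathbf{x}\rangle\bigr)\langle\mathbf{z}_t,Q^{(1)}\mathbf{x}\rangle$, with $w_t=\alpha_t(\beta_t'-\alpha_t')/(\beta_t-\alpha_t)$. Your substitutions therefore give $-w_t\log\langle\mathbf{x}_\theta,\mathbf{x}\rangle\,\mathbb{I}[\mathbf{z}_t\in\mathcal{S}_1]$, while the corollary displays $+w_t\log\langle\mathbf{x}_\theta,\mathbf{x}\rangle\,\mathbb{I}[\mathbf{z}_t\in\mathcal{S}_1]$. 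The mask branch, by contrast, comes out with the displayed sign. The discrepancy depends on $\theta$, so $C_2$ cannot absorb it. Since only one branch flips, no global ELBO/NELBO sign convention reconciles them either.

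So your plan, carried out honestly, does not produce the displayed formula, and neither does the paper's identical one-liner. The display is inconsistent with the theorem it specializes. It also conflicts with the main-text Theorem~\ref{theorem: continuous time NELBO}, which states the same deterministic-grouping result with a $+$ on the mask branch.

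If you want a true statement, do not take the continuous-time weights of Theorem~\ref{theorem:general_HDDM_NELBO} on trust. Take the limit yourself from the discrete-time KLs in the appendix, which are correct for deterministic $\mathbf{\Phi}$. The $\mathcal{S}_1$ KL is $-\frac{\alpha_s\beta_{t|s}-\alpha_t}{\beta_t-\alpha_t}\log\langle\mathbf{x}_\theta,\mathbf{x}\rangle$, and the mask KL carries the factor $1-\beta_{t|s}$. With $s=t-1/T$ you get $T(\alpha_s\beta_{t|s}-\alpha_t)\to\alpha_t(\beta_t'/\beta_t-\alpha_t'/\alpha_t)$, not $\alpha_t(\beta_t'-\alpha_t')$, and $T(1-\beta_{t|s})\to-\beta_t'/\beta_t$. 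The integrand is then
\begin{align*}
&-\frac{\alpha_t}{\beta_t-\alpha_t}\Bigl(\frac{\beta_t'}{\beta_t}-\frac{\alpha_t'}{\alpha_t}\Bigr)\log\langle\mathbf{x}_\theta,\mathbf{x}\rangle\,\mathbb{I}[\mathbf{z}_t\in\mathcal{S}_1]\\
&+\frac{\beta_t'}{\beta_t(1-\beta_t)}\Bigl((\beta_t-\alpha_t)\log\langle Q^{(1)}\mathbf{x},Q^{(1)}\mathbf{x}_\theta\rangle+\alpha_t\log\langle\mathbf{x}_\theta,\mathbf{x}\rangle\Bigr)\mathbb{I}[\mathbf{z}_t=\mathbf{m}].
\end{align*}
This is nonnegative whenever the forward kernel is valid, i.e.\ $\beta_{t|s}\ge\alpha_{t|s}$, equivalently $\beta_t/\alpha_t$ nondecreasing. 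At $\beta_t=\alpha_t$ it reduces to the MDLM integrand $\frac{\alpha_t'}{1-\alpha_t}\log\langle\mathbf{x}_\theta,\mathbf{x}\rangle$.

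By contrast, for the paper's schedule $\alpha_t=1-t$, $\beta_t=1-t^2$, the displayed weight is $w_t=(1-2t)/t$, which changes sign at $t=1/2$. The correct coefficient of $-\log\langle\mathbf{x}_\theta,\mathbf{x}\rangle$ there is $1/(t(1+t))$. So no choice of sign makes the displayed $\mathcal{S}_1$ term a valid rate.

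Finally, the ``main obstacle'' you flag is not one. The bound holds for every reverse kernel. Moreover, the paper's Case~2 kernel already divides by $\mathbf{z}_t^{\top}Q^{(1)}\mathbf{x}_\theta$, so it sees $\mathbf{x}_\theta$ only through its renormalization inside $\varphi^{-1}(\mathbf{z}_t)$. Restricting the support there is therefore without loss of generality.
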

\end{tcolorbox}

\section{Experiment details}
\label{app:Experiment_details}

\subsection{Decoupled Atom Encoding (DAE)}
\label{app:DAE}
While standard graph-based diffusion models typically adopt a coarse node encoding based solely on atomic numbers ($Z$), decoupled atom encoding (DAE) expands the original token vocabulary by explicitly decoupling three standards: aromaticity, hydrogen saturation, and formal charge magnitude. Decoupled Atom Encoding (DAE) expands the token vocabulary by explicitly decoupling three critical chemical descriptors: aromaticity, hydrogen saturation, and formal charge magnitude. Unlike previous methods that rely on implicit hydrogen estimation (e.g., via RDKit’s valence rules), DAE treats these attributes as primary node features to be explicitly encoded and decoded. This approach resolves the one-to-many mapping problem between atomic tokens and their physical states, enabling the near-perfect reconstruction of drug-like scaffolds from the MOSES and GuacaMol datasets. Furthermore, this extended vocabulary facilitates the reliable generation of complex heteroaromatics and zwitterionic species which are extremely rare for baselines using coarse tokenization. Specifically, we emphasize that tokenizing $[nH]$ as a distinct state is fundamentally different from modeling explicit hydrogen atoms as separate nodes. While the latter can significantly increases graph complexity and computational overhead, DAE preserves graph sparsity while maintaining chemical precision.

\begin{table}[ht]
\centering
\caption{Comparison of Atom Vocabularies on the MOSES Dataset. DAE resolves structural ambiguities by decoupling elements into specific aromatic and hydrogen-locked states.}
\label{tab:DAE_moses_vocabs}
\begin{small}
\begin{tabular}{lllc}
\toprule
\textbf{Method} & \textbf{Elemental Basis} & \textbf{Unique Tokens (Vocabulary)} & \textbf{Size} \\ 
\midrule
Standard Encoding & \{C, N, S, O, F, Cl, Br\} & C, N, S, O, F, Cl, Br & 7 \\
\midrule
\textbf{DAE (Ours)} & \{C, N, S, O, F, Cl, Br\} & \textbf{Aliphatic:} C, N, S, O, F, Cl, Br & 12 \\
& & \textbf{Aromatic:} c, n, \underline{nH}, s, o & \\
\bottomrule
\end{tabular}
\end{small}
\end{table}

\paragraph{DAE in MOSES}
The MOSES dataset consists primarily of stable, neutral drug-like molecules which is clean lead filtered from the ZINC dataset~\citep{irwin2005zinc}. In this context, the reconstruction bottleneck is primarily structural. Previous coarse-grained encodings fail to resolve the placement of pyrrolic hydrogens ($[nH]$), a critical motif in heteroaromatic rings like indole or imidazole. By explicitly decoupling aromaticity and hydrogen counts, DAE enables model can explicitly distinguish these motifs, resulting in improved generation quality.

\begin{table}[ht]
\centering
\caption{Vocabulary expansion for the Guacamol dataset. DAE scales from 12 elemental types to 56 semantic tokens including aromatic and charged atoms.}
\label{tab:guacamol_vocabulary}
\begin{small}
\begin{tabular}{lll}
\toprule
\textbf{Category} & \textbf{Standard Encoding (Size: 12)} & \textbf{DAE Tokens (Size: 56)} \\ 
\midrule
Neutral Aliphatic & \{C, N, O, F, B, Br, Cl, I, P, S, Se, Si\} & C, N, O, F, B, Br, Cl, I, P, S, Se, Si \\
\midrule
Aromatic States & (None / Implicit) & c, c+, c-, n, nH, n+, nH+, n-, s, s+, o, o+, se, se+, p \\
\midrule
Charged \& & (None / Implicit) & C+, C-, N+, NH+, NH2+, NH3+, N-, NH-, O+, O-, F+, F-, \\
Hypervalent & & B-, Br+2, Br-, Cl+, Cl+2, Cl+3, Cl-, I+, I+2, I+3, \\
& & P+, P-, S+, S-, Se+, Se-, Si- \\
\bottomrule
\end{tabular}
\end{small}
\end{table}

\paragraph{DAE in GuacaMol}
\label{app:DAE_in_guacamol}
GuacaMol~\citep{brown2019guacamol} is constructed from a standardized subset of ChEMBL~\citep{mendez2019chembl}, restricted to common medicinal-chemistry elements. In this unconstrained space, previous models suffer from a fundamental reconstruction failure; for instance, standard coarse-grained encoding achieves only a 1.88\% success rate on the [nH] group, with a negligible 0.09\% identity preservation rate. While previous models implicitly rely on the relaxation technique which can improve the success rates (e.g., increasing charged group success from 80.43\% to 96.54\%), this it only preserves 80.07\% of total molecules, indicating a failure to maintain the original chemical identity. As illustrated in Figure~\ref{fig:dae_reconstruction_comparison_guacamol}, \textbf{MolHIT} addresses this through Decoupled Atom Encoding (DAE), which expands the vocabulary to 56 tokens by encode-decode the atoms with extended vocabulary space, resulting in 100 \% success rate and over 99.98\% in identity preservation rate. Moreover, as illustrated in Fig.~\ref{fig:guacamol_formal_charge_ratio}, the effect of DAE also happens in generative performance, where it enables generating molecules with formal charge which consist of about $6\%$ in GuacaMol dataset. 

\begin{figure}[t]
    \centering
    \includegraphics[width=0.85\textwidth]{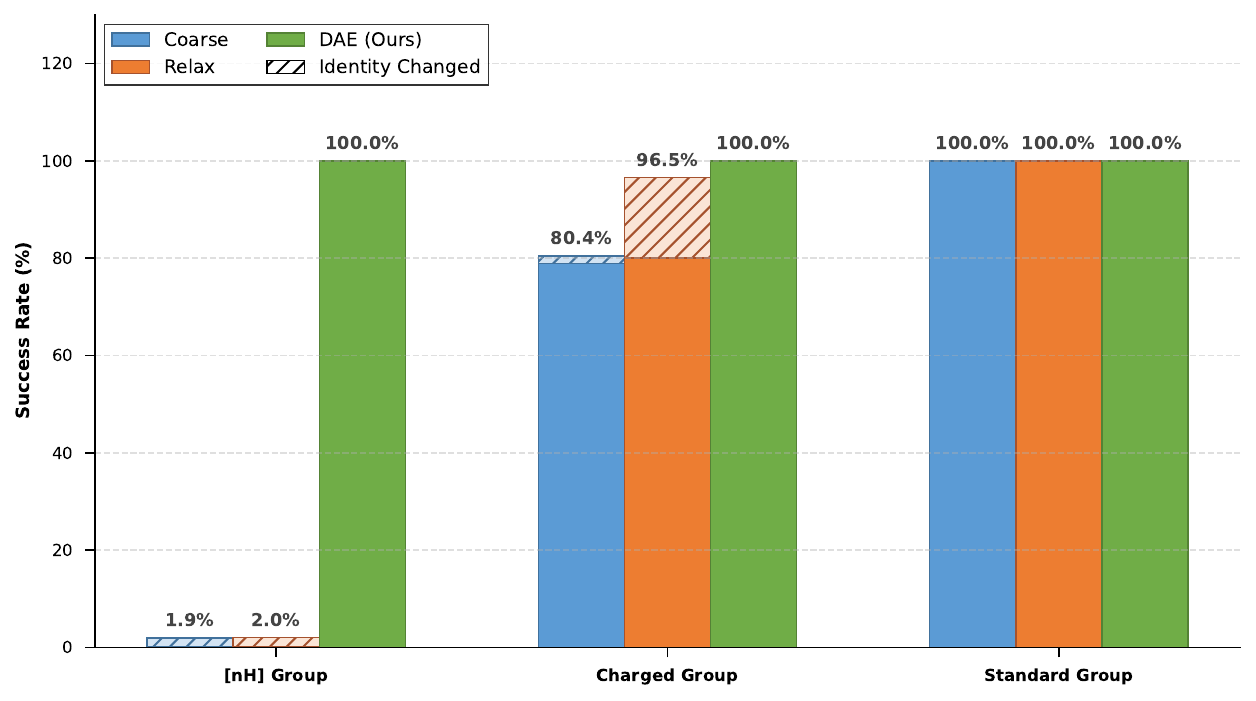} 
    \caption{The ratios of generated molecules having formal charge. \textbf{MolHIT} can reach to the training level proportion, while models with previous coarse encoding (left two) barely generate the charged atoms.}
    \label{fig:dae_reconstruction_comparison_guacamol}
\end{figure}

\begin{figure}[t]
    \centering
    \includegraphics[width=0.85\textwidth]{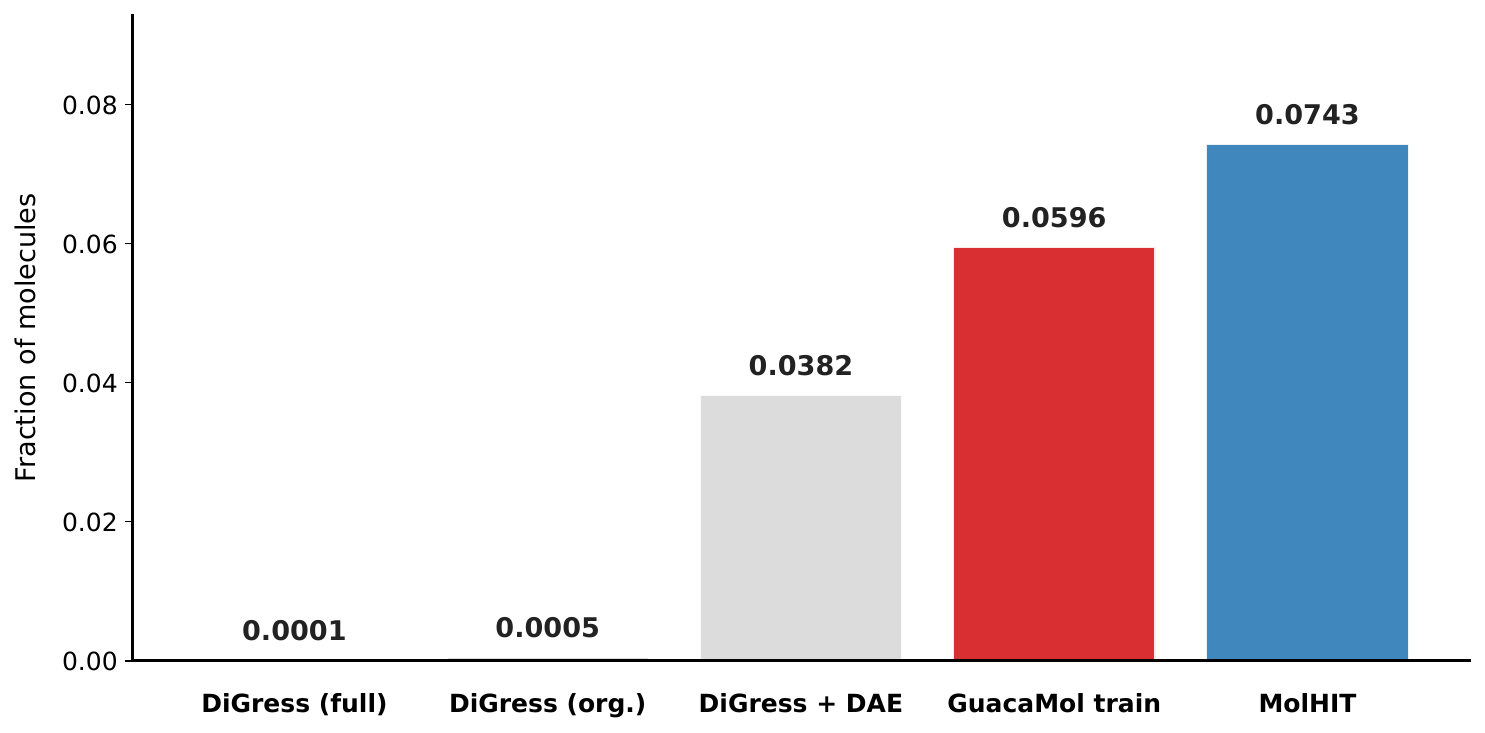} 
    \caption{\textbf{Reconstruction Fidelity and Identity Preservation.} We measure the proportion of generated molecules that have at least one atom with formal charge.}
    \label{fig:guacamol_formal_charge_ratio}
\end{figure}

\subsection{Grouping in HDDM}
\label{app:grouping_HDDM_details}

\paragraph{Grouping Details for MOSES and GuacaMol} \label{app:grouping_details}

We employ dataset-specific grouping strategies to align the intermediate state space $\mathcal{S}_1$ with the underlying chemical distribution of each corpus. Table \ref{tab:grouping_appendix} summarizes these partitions.

\begin{table}[ht]
\centering
\caption{Deterministic grouping kernels for node state space partitioning in MOSES and GuacaMol.}
\label{tab:grouping_appendix}
\begin{small}
\begin{tabular}{lll}
\toprule
\textbf{Dataset} & \textbf{Group ID} & \textbf{Atom Elements ($\mathcal{S}_0$)} \\
\midrule
\multirow{4}{*}{MOSES}    & Group 1 & \{C\} \\
                          & Group 2 & \{N, O, S\} \\
                          & Group 3 & \{F, Cl, Br\} \\
                          & Group 4 & \{c, o, n, [nH], s\} \\
\midrule
\multirow{6}{*}{GuacaMol} & Group 1 & \{F, Cl, Br, I, F$^-$, Cl$^-$, Br$^-$\} \\
                          & Group 2 & \{C, N, O, P, S, Se\} \\
                          & Group 3 & \{c, n, [nH], o, s, se, p\} \\
                          & Group 4 & \{N$^+$, n$^+$, [nH]$^+$, P$^+$, [NH]$^+$, [NH$_2$]$^+$, [NH$_3$]$^+$, Br$^{+2}$, Cl$^{+2}$, Cl$^{+3}$, I$^{+2}$, I$^{+3}$\} \\
                          & Group 5 & \{O$^-$, N$^-$, [NH]$^-$, O$^+$, S$^+$, B$^-$, C$^+$, C$^-$, c$^+$, c$^-$, n$^-$, s$^+$, o$^+$, se$^+$, F$^+$, Cl$^+$, I$^+$, P$^-$, S$^-$, Se$^+$, Se$^-$, Si$^-$\} \\
                          & Group 6 & \{B, Si\} \\
\bottomrule
\end{tabular}
\end{small}
\end{table}

\subsection{Full experimental results with standard deviations}
\label{app:full_stats_results}
For statistical significance, we run 3 experiments for every experiment. We put the result including standard deviation of unconditional MOSES generation in Table~\ref{tab:moses_comprehensive_fullstats}, GuacaMol experiment in Table~\ref{tab:guacamol_fullstats}, multi-property guided generation result in Table~\ref{tab:multiproperty_fullstats}, and scaffold extension result in Table~\ref{tab:scaffold_experiment_fullstats}.

\begin{table*}[t!]
\centering
\caption{Unconditional generation on MOSES dataset with full statistics. We bring the reported value from Cometh and DeFoG from their work.
}
\label{tab:moses_comprehensive_fullstats}
\resizebox{\textwidth}{!}{
\renewcommand{\arraystretch}{1.1}
\renewcommand{\tabcolsep}{3.5pt}
\small
\begin{tabular}{l l | c | c c | c c c c c c c}
\toprule
Category & Model  & Quality $\uparrow$ & Scaf-Novel $\uparrow$ & Scaf-Ret. $\uparrow$ & Valid $\uparrow$ & Unique $\uparrow$ & Novel $\uparrow$ & Filters $\uparrow$ & FCD $\downarrow$ & SNN $\uparrow$ & Scaf $\uparrow$ \\

\midrule
- & Training set & 95.4 & \textemdash & \textemdash & 100.0 & 100.0 & \textemdash & 100.0 & 0.48 & 0.59 & - \\

\midrule
1D Sequence & VAE\tabcite{kingma2013auto}  
& 92.8 {\scriptsize $\pm$ 0.2} & 0.22 {\scriptsize $\pm$ 0.01} & 0.031 {\scriptsize $\pm$ 0.003}
& 97.7 {\scriptsize $\pm$ 0.1} & 99.7 {\scriptsize $\pm$ 0.0} & 69.5 {\scriptsize $\pm$ 0.6} & \textbf{99.7 {\scriptsize $\pm$ 0.0}} & 0.57 {\scriptsize $\pm$ 0.00} & 0.58 {\scriptsize $\pm$ 0.01} & 5.9 {\scriptsize $\pm$1.0} \\

& CharRNN~\tabcite{segler2018generating}   
& 92.6 {\scriptsize $\pm$ 2.5} & 0.29 {\scriptsize $\pm$ 0.04} & \textbf{0.035 {\scriptsize $\pm$ 0.003}}   
& 97.5 {\scriptsize $\pm$ 2.6} & 99.9 {\scriptsize $\pm$ 0.0} & 84.2 {\scriptsize $\pm$ 5.1} & 99.4 {\scriptsize $\pm$ 0.3} & \textbf{0.52 {\scriptsize $\pm$ 0.03}} & 0.56 {\scriptsize $\pm$ 0.01} & 11.0 {\scriptsize $\pm$ 0.8} \\

& SAFE-GPT \tabcite{noutahi2024gotta} 
& 92.8 {\scriptsize $\pm$ 0.0} & 0.12 {\scriptsize $\pm$ 0.00} & 0.015 {\scriptsize $\pm$ 0.000}   
& \textbf{99.8 {\scriptsize $\pm$ 0.0}} & 98.9 {\scriptsize $\pm$ 0.0} & 43.7 {\scriptsize $\pm$ 0.3} & 97.7 {\scriptsize $\pm$ 0.0} & 0.72 {\scriptsize $\pm$ 0.02} & 0.57 {\scriptsize $\pm$ 0.01} & 6.3 {\scriptsize $\pm$ 0.7} \\

& GenMol \tabcite{lee2025genmol}   
& 62.1 {\scriptsize $\pm$ 0.0} & 0.05 {\scriptsize $\pm$ 0.00} & 0.012 {\scriptsize $\pm$ 0.001}   
& 99.7 {\scriptsize $\pm$ 0.1} & 64.0 {\scriptsize $\pm$ 0.5} & 68.9 {\scriptsize $\pm$ 0.4} & 98.1 {\scriptsize $\pm$ 0.1} & 16.36 {\scriptsize $\pm$ 0.07} & \textbf{0.64 {\scriptsize $\pm$ 0.01}} & 1.6 {\scriptsize $\pm$ 0.1} \\

\midrule
2D Graph 
& DiGress~\tabcite{vignac2022digress} 
& 82.5 {\scriptsize $\pm$ 0.7} & 0.26 {\scriptsize $\pm$ 0.00} & 0.031 {\scriptsize $\pm$ 0.000}   
& 87.1 {\scriptsize $\pm$ 0.9} & \textbf{100.0 {\scriptsize $\pm$ 0.0}} & 94.2 {\scriptsize $\pm$ 0.2} & 97.5 {\scriptsize $\pm$ 0.0} & 1.25 {\scriptsize $\pm$ 0.03} & 0.53 {\scriptsize $\pm$ 0.00} & 12.8 {\scriptsize $\pm$ 1.4} \\

& DisCo~\tabcite{xu2024discrete}   
& - & - & - 
& 88.3 & 100.0 & \textbf{\underline{97.7}} & 95.6  & 1.44  & 0.50  & 15.1  \\

& Cometh~\tabcite{siraudin2024cometh}   
& 82.1 {\scriptsize $\pm$ 0.1} & 0.36 {\scriptsize $\pm$ 0.00} & 0.023 {\scriptsize $\pm$ 0.000} 
& 87.2 {\scriptsize $\pm$ 0.0} & 100.0 {\scriptsize $\pm$ 0.0} & 96.4 {\scriptsize $\pm$ 0.1} & 97.3 {\scriptsize $\pm$ 0.0} & 1.44 {\scriptsize $\pm$ 0.02} & 0.51 {\scriptsize $\pm$ 0.00} & \textbf{\underline{16.8 {\scriptsize $\pm$ 0.7}}} \\

& DeFoG~\tabcite{qin2024defog}   
& 88.5 {\scriptsize $\pm$ 0.0} & 0.26 {\scriptsize $\pm$ 0.00} & 0.031 {\scriptsize $\pm$ 0.000}   
& 92.8 {\scriptsize $\pm$ 0.0} & 99.9 {\scriptsize $\pm$ 0.0} & 92.1 {\scriptsize $\pm$ 0.0} & \underline{98.9 {\scriptsize $\pm$ 0.0}} & 1.95 {\scriptsize $\pm$ 0.00} & 0.55 {\scriptsize $\pm$ 0.00} & 14.4 {\scriptsize $\pm$ 0.0} \\
\cmidrule{2-12}
& \textbf{MolHIT}   
& \textbf{\underline{94.2 {\scriptsize $\pm$ 0.2}}} & \textbf{\underline{0.39 {\scriptsize $\pm$ 0.00}}} & \underline{0.033 {\scriptsize $\pm$ 0.001}} 
& \underline{99.1 {\scriptsize $\pm$ 0.0}} & 99.8 {\scriptsize $\pm$ 0.0} & 91.4 {\scriptsize $\pm$ 0.2} & 98.0 {\scriptsize $\pm$ 0.00} & \underline{1.03 {\scriptsize $\pm$ 0.02}} & \underline{0.55 {\scriptsize $\pm$ 0.00}} & 14.4 {\scriptsize $\pm$ 1.0} \\
\bottomrule
\end{tabular}}
\end{table*}

\begin{table}[ht]
\centering
\caption{Full statistics of GuacaMol benchmark results (unfiltered). Val.: Validity, V.U.: Unique, V.U.N.: Novel. All results are averaged over 3 runs.}
\label{tab:guacamol_fullstats}
\vspace{-0.05in}
\resizebox{0.5\columnwidth}{!}{ 
\renewcommand{\arraystretch}{1.0}
\renewcommand{\tabcolsep}{3pt} 
\footnotesize 
\begin{tabular}{l ccccc}
\toprule
Model & Val. $\uparrow$ & V.U. $\uparrow$ & V.U.N. $\uparrow$ & KL $\uparrow$ & FCD $\downarrow$ \\
\midrule
Training set & 100.0 & 100.0 & \textemdash & 99.9 & 92.8 \\
\midrule
DiGress (org.) & 85.2 {\scriptsize} & 85.2 {\scriptsize } & 85.1  & 92.9 & \textbf{68.0}  \\
DiGress (full) & 74.7 {\scriptsize $\pm$ 0.4} & 74.6 {\scriptsize $\pm$ 0.5} & 74.0 {\scriptsize $\pm$ 0.4} & 92.4 {\scriptsize $\pm$ 0.5} & 61.1 {\scriptsize $\pm$ 0.2} \\
DiGress+DAE    & 65.2 {\scriptsize $\pm$ 0.4} & 65.2 {\scriptsize $\pm$ 0.4} & 64.9 {\scriptsize $\pm$ 0.4} & 87.0 {\scriptsize $\pm$ 0.4} & 49.2 {\scriptsize $\pm$ 0.6} \\
\midrule
\textbf{MolHIT (Ours)} & \textbf{87.1 {\scriptsize $\pm$ 0.5}} & \textbf{87.1 {\scriptsize $\pm$ 0.3}} & \textbf{86.0 {\scriptsize $\pm$ 0.5}} & \textbf{96.7 {\scriptsize $\pm$ 0.1}} & {54.9 {\scriptsize $\pm$ 0.2}} \\
\bottomrule
\end{tabular}
}
\vspace{-0.1in}
\end{table}

\begin{table*}[t]
\caption{Full statistics of multi-property guided generation on MOSES with 4 different conditions. We report mean absolute error (MAE; $\downarrow$), Pearson correlation ($r$; $\uparrow$), and validity. Avg. is the macro-average across properties. \textbf{Bold} denotes best values.}
\label{tab:multiproperty_fullstats}
\centering
\resizebox{\textwidth}{!}{%
\begin{tabular}{l
  c c c c c
  c c c c c
  c
}
\toprule
& \multicolumn{5}{c}{MAE $\downarrow$} & \multicolumn{5}{c}{Pearson $r \uparrow$} & {Validity (\%) $\uparrow$} \\
\cmidrule(r){2-6} \cmidrule(r){7-11} \cmidrule(l){12-12}
Method
& {QED} & {SA} & {LogP} & {MW} & {\textbf{Avg.}}
& {QED} & {SA} & {LogP} & {MW} &{\textbf{Avg.}}
& {} \\
\midrule
Marginal   
& \mstd{0.117}{0.002} & \mstd{0.115}{0.003} & \mstd{0.067}{0.001} & \mstd{0.272}{0.009} & \mstd{0.143}{0.004}
& \mstd{0.489}{0.003} & \mstd{0.570}{0.012} & \mstd{0.802}{0.003} & \mstd{0.396}{0.001} & \mstd{0.564}{0.005}
& \mstd{75.03}{0.74} \\
Marginal + DAE  
& \mstd{0.107}{0.001} & \mstd{0.094}{0.001} & \mstd{0.061}{0.000} & \mstd{0.227}{0.004} & \mstd{0.122}{0.001}
& \mstd{0.565}{0.005} & \mstd{0.559}{0.009} & \mstd{0.836}{0.005} & \mstd{0.437}{0.015} & \mstd{0.599}{0.002}
& \mstd{87.85}{0.46} \\
\textbf{MolHIT}  
& \bfseries \mstd{0.061}{0.001} & \bfseries \mstd{0.040}{0.001} & \bfseries \mstd{0.049}{0.001} & \bfseries \mstd{0.081}{0.005} & \bfseries \mstd{0.058}{0.002}
& \bfseries \mstd{0.804}{0.009} & \bfseries \mstd{0.790}{0.011} & \bfseries \mstd{0.950}{0.004} & \bfseries \mstd{0.685}{0.024} & \bfseries \mstd{0.807}{0.011}
& \bfseries \mstd{96.31}{0.23} \\
\bottomrule
\end{tabular}
}
\end{table*}

\begin{table}[t!]
\centering
\caption{Full statistics of scaffold extension results on MOSES. Results are averaged over 3 runs of 10,000 targets. \textbf{Hit@k} denotes the recovery of ground-truth within $k$ samples.}
\label{tab:scaffold_experiment_fullstats}
\vspace{-0.05in}
\resizebox{0.5\columnwidth}{!}{
\renewcommand{\arraystretch}{1.1}
\renewcommand{\tabcolsep}{2.5pt} 
\footnotesize
\begin{tabular}{l cccc} 
\toprule
Model & Valid (\%) $\uparrow$ & Diversity $\uparrow$ & Hit@1 $\uparrow$ & Hit@5 $\uparrow$ \\
\midrule
DiGress & 50.8 {\scriptsize $\pm$ 0.5} & 44.8 {\scriptsize $\pm$ 1.8} & 2.07 {\scriptsize $\pm$ 0.09} & 6.41 {\scriptsize $\pm$ 0.21} \\
Marginal + DAE & 64.8 {\scriptsize $\pm$ 0.2} & \textbf{58.0 {\scriptsize $\pm$ 0.1}} & 1.67 {\scriptsize $\pm$ 0.10} & 6.37 {\scriptsize $\pm$ 0.24} \\
\midrule
\textbf{MolHIT (Ours)} & \textbf{83.9 {\scriptsize $\pm$ 0.4}} & 57.4 {\scriptsize $\pm$ 0.6} & \textbf{3.92 {\scriptsize $\pm$ 0.23}} & \textbf{9.79 {\scriptsize $\pm$ 0.09}} \\
\bottomrule
\end{tabular}
}
\vspace{-0.1in}
\end{table}

\subsection{Implementation of baselines}
For all baselines, we use released checkpoints when available. Otherwise, we train the models using their official codebase, following the training hyperparameters reported in the paper or provided in the codebase.
Note that the original GenMol~\citep{lee2025genmol} model was trained on a much larger molecule dataset, so we train the model on the MOSES dataset for a fair comparison.

\subsection{Unconditional generation with MOSES and GuacaMol}
\label{app:exp_details_uncond_moses}

\paragraph{Training details}
For our model backbone, we adopt the graph transformer proposed by \citet{vignac2022digress}, which simultaneously predicts node and edge features. To ensure a fair comparison across all experimental settings, we maintain a consistent architecture of 12 transformer blocks without altering any internal dimensional configurations. The total trainable parameter count is approximately 16.2M. The introduction of additional token indices (DAE and HDDM) adds negligible overhead where representing a variance of less than $0.01\%$ in total parameters. For training stability, we employ gradient clipping with a threshold of 2.0 and an Exponential Moving Average (EMA) rate of 0.999. We early stop with 100 epoch training with MOSES and 50 epochs with GuacaMol, compared to the original 300 epoch training of other graph diffusion baselines~\citep{vignac2022digress,siraudin2024cometh,qin2024defog}. We also remove calculating geometric prior originally used in \citet{vignac2022digress}, where they use extra graph features as conditional information. In our experiments, this has negligible effects on the performance.

\paragraph{Evaluation of MOSES}
\label{app:moses_evaluation_metrics}

The following metrics are utilized to evaluate the generative performance on the MOSES dataset, following the standardized protocols established by \citet{polykovskiy2020molecular}.

\begin{itemize}\item \textbf{Validity ($\uparrow$):} The fraction of generated molecules that pass RDKit's sanitization checks and basic chemical valency rules. High validity is a primary indicator that the \textbf{DAE} system successfully constrains the sampling process to the chemically feasible manifold.\item \textbf{Uniqueness ($\uparrow$):} The proportion of valid molecules that are not duplicates. This measures the model's ability to avoid mode collapse and explore a diverse structural space.

\item \textbf{Novelty ($\uparrow$):} The fraction of valid, unique molecules that were not present in the training set. This differentiates between a model that has memorized the data and one that has learned the underlying generative distribution.

\item \textbf{Filters ($\uparrow$):} The percentage of generated molecules that pass common medicinal chemistry filters (e.g., MCULE, BRENK, and PAINS). This evaluates the drug-likeness and synthetic viability of the generated samples.

\item \textbf{Fr\'{e}chet ChemNet Distance (FCD~\citep{preuer2018frechet}, $\downarrow$):} A measure of the distance between the multivariate distributions of generated and test molecules in the feature space of ChemNet. FCD captures both chemical and biological similarity, serving as the most rigorous metric for distributional fidelity.

\item \textbf{Similarity to Nearest Neighbor (SNN, $\uparrow$):} The average Tanimoto similarity between a generated molecule and its closest neighbor in the test set. High SNN indicates that the model has captured the specific structural motifs and chemical space of the benchmark.

\item \textbf{Scaffold Similarity (Scaf, $\uparrow$):} The cosine similarity between the frequencies of Bemis--Murcko scaffolds \citep{bemis1996properties} in the generated and test sets. This assesses whether the model's learned distribution of backbone structure matches the architectural diversity of real-world leads.
\end{itemize}

\paragraph{Baselines}

\subsection{Structure novelty metric}
\label{app:structure_novelty_metric}

\begin{itemize}

\item \textbf{Scaffold Novelty:} We evaluate the model's ability to innovate beyond the training distribution using Bemis–Murcko scaffolds~\citep{bemis1996properties}. The absolute number of unique generated scaffolds absent from the training set: 
\begin{equation} \text{Scaf-Novel} = \frac{|\mathcal{S}_{\text{gen}} \setminus \mathcal{S}_{\text{train}}|}{n_{\text{total}}}. \end{equation} This metric quantifies the model's capacity for structural extrapolation, measuring its efficiency in exploring the beyond the molecular structure of the given dataset.

\item \textbf{Scaffold Retrieval:} This assesses the model's ability to rediscover known, high-quality frameworks from the held-out test set. This is defined as the absolute number of unique test-set scaffolds successfully generated: 
from the training set: 
\begin{equation}
    \text{Scaf-Ret} = \frac{|\mathcal{S}_{\text{gen}} \cap \mathcal{S}_{\text{test}}|}{n_{\text{total}}}.
\end{equation}
Scaffold retrieval serves as a rigorous test of distributional accuracy. A high retrieval density demonstrates that the model has not merely learned to generate novel-looking noise, but has accurately captured the underlying manifold of valid, drug-like molecules defined by the test distribution.
\end{itemize}

\subsection{Unconditional generation with GuacaMol}
\label{app:exp_detail_uncond_guacamol}

\paragraph{GuacaMol experiment}
For experiment on GuacaMol, we test our algorithm on the unfiletered, full dataset. Previous graph diffusion model baselines~\citep{vignac2022digress, siraudin2024cometh, qin2024defog} train the model on the filtered dataset, where they filter out the molecules that are failed to be reconstructed back. This can bias the training data distribution. In contrast, we use full, unfiltered dataset for the experiment and since there is no graph diffusion baseline, we compare with the original DiGress trained on a full GuacaMol dataset with coarse atom encoding, Discrete Diffusion using marginal transition with DAE, and compare them with MolHIT.

\subsection{Multi-property guided generation}
\label{app:exp_details_multiprop}

\paragraph{Data construction}
For conditional generation, we augment the MOSES dataset \citep{polykovskiy2020molecular} with four continuous molecular descriptors: Quantitative Estimate of Drug-likeness (QED), Synthetic Accessibility (SA) score, Octanol-Water Partition Coefficient ($\text{logP}$), and Molecular Weight (MW).

\begin{itemize}
    \item \textbf{Quantitative Estimate of Drug-likeness (QED, $\uparrow$):}
    A widely used composite score that summarizes multiple molecular properties (e.g., lipophilicity, polarity, and molecular size) into a single measure of drug-likeness; higher values indicate more drug-like compounds.

    \item \textbf{Synthetic Accessibility (SA, $\downarrow$):}
    An empirical estimate of synthetic difficulty that combines fragment-based contributions with a complexity penalty; lower values indicate molecules that are easier to synthesize.

    \item \textbf{Octanol--Water Partition Coefficient (logP):}
    A measure of lipophilicity that is informative of solubility and membrane permeability; excessively high logP is typically associated with poor solubility and unfavorable ADMET profiles.

    \item \textbf{Molecular Weight (MW):}
    The molecular mass in Daltons. Consistency with the training distribution (e.g., MOSES) helps ensure generated molecules remain within a drug-like regime.
\end{itemize}

All properties are calculated using the RDKit library and the sascorer module~\citep{ertl2009estimation}. To ensure stable convergence of the conditioning vector $\mathbf{C}$ within our AdaLayerNorm layers, we perform min-max normalization on these values using the global statistics of the training split, which are in Table~\ref{tab:multi_property_norm_stats}.  

\begin{table}[ht]
\centering
\caption{Min and max values for molecular property conditioning in MOSES training / test split.}
\label{tab:multi_property_norm_stats}
\begin{small}
\begin{tabular}{ll c c c c}
\toprule
\textbf{Split} & \textbf{Statistic} & \textbf{QED} & \textbf{SA Score} & \textbf{logP} & \textbf{MW (Da)} \\
\midrule
\multirow{2}{*}{Training} & Min & 0.1912 & 1.2694 & -5.3940 & 250.017 \\
                          & Max & 0.9484 & 7.4831 & 5.5533 & 349.999 \\
\midrule
\multirow{2}{*}{Test}     & Min & 0.2265 & 1.3339 & -4.2894 & 250.042 \\
                          & Max & 0.9484 & 6.6916 & 5.7255 & 349.990 \\
\bottomrule
\end{tabular}
\end{small}
\end{table}

\paragraph{Conditional graph transformer}
\label{app:exp_details_cond_graph_transformer}

While we maintain the core node-edge attention mechanism of the original graph transformer~\citep{vignac2022digress}, we introduce several key modifications to enable conditional modeling. First, we remove the persistent global feature vector $y$—which in the original framework is updated at every layer—and replace it with a centralized conditioning vector $\mathbf{C}$. This vector is composed of a sinusoidal timestep embedding \citep{ho2020denoising} and an optional MLP-encoded external property condition $\mathbf{c}$. Second, to integrate $\mathbf{C}$ into the denoising process, we replace standard Layer Normalization with Adaptive Layer Normalization (AdaLayerNorm) for node features. Specifically, for a node embedding $\mathbf{x}$, the normalization is defined as:
$$\text{AdaLN}(\mathbf{x}, \mathbf{C}) = (1 + \gamma(\mathbf{C})) \cdot \text{LayerNorm}(\mathbf{x}) + \beta(\mathbf{C})$$
where $\gamma$ and $\beta$ are affine transformations of the conditioning vector. This allows the global context (time and properties) to directly modulate the scale and shift of node representations. Finally, we implement Classifier-Free Guidance (CFG) support by incorporating a dropout mechanism on the property embedding during training, while ensuring the temporal signal remains persistent to maintain denoising stability. Our conditional graph transformer naturally inherits permutation equivariance, which is different from the~\citet{liu2024graph}.

\paragraph{Evaluation details}
For sampling, we employ Classifier-Free Guidance (CFG) \citep{ho2022classifier} with a guidance scale of $w=1.0$. We observe that in our discrete graph-diffusion framework, increasing the guidance weight beyond unity did not consistently yield better property alignment. We leave the better design the sampler or models to be effective in higher guidance strength $w$ as a promising avenue for future research.

\subsection{Scaffold extension}
\label{app:exp_scaffold_extension}

\paragraph{Task Formulation} Given a ground-truth molecule $\mathcal{G}$ from the test split, we use RDKit to extract its scaffold $\mathcal{S} \subset \mathcal{G}$. The task is to generate a completed molecule $\mathcal{M}$ such that $\mathcal{S} \subseteq \mathcal{M}$. To isolate the generative capability from size prediction errors, we bound the generation size (number of atoms) to match $|\mathcal{G}|$.

\paragraph{Sampling Protocol}
At each reverse timestep $t$, the region corresponding to $\mathcal{S}$ is forced to be the same (i.e, $q(\mathbf{x}_t | \mathbf{x}_{\text{scaffold}}) =\mathbf{x}_{\text{scaffold}} $, ensuring the scaffold region is strictly fixed during the generation. The extension region is initialized from the limit distribution (i.e, prior) $p_{\text{prior}}$ and evolved via the standard reverse process. We generate $K=1,5$ independent samples per scaffold to capture the model's exploration capability. 
\paragraph{Metric Definitions} Metrics are computed per scaffold and then averaged across the test set. Let $\mathcal{M}_1, \dots, \mathcal{M}_K$ be the generated graphs for a single scaffold.
\begin{itemize}\item \textbf{Validity:} The fraction of $\mathcal{M}_i$ that are chemically valid according to RDKit sanitization.

\item \textbf{Diversity:} Calculated on the unique valid set $\mathcal{U}$. We define diversity as $1 - \frac{1}{|\mathcal{U}|^2} \sum_{u, v \in \mathcal{U}} \text{Sim}(u, v)$, where $\text{Sim}$ is the Tanimoto similarity using Morgan fingerprints ($r=2$, 2048 bits).
\item \textbf{Exact Match (Hit@$K$):} A binary indicator, set to 1 if the ground truth $\mathcal{G}$ (canonical SMILES) is present in the generated set $\{\mathcal{M}_1, \dots, \mathcal{M}_K\}$, and 0 otherwise.

\end{itemize}


\end{document}